\title{{\algname}: Test-Time Adaptation with Negative Data Augmentation}
\author{
    %Authors
    % All authors must be in the same font size and format.
    Ruxi Deng\textsuperscript{\rm 1},  Wenxuan Bao\textsuperscript{\rm 1},  Tianxin Wei\textsuperscript{\rm 1},  Jingrui He\textsuperscript{\rm 1} \\
}
\title{My Publication Title --- Single Author}
\author {
    Author Name
}
\title{My Publication Title --- Multiple Authors}
\author {
    % Authors
    First Author Name\textsuperscript{\rm 1},
    Second Author Name\textsuperscript{\rm 2},
    Third Author Name\textsuperscript{\rm 1}
}
\newcommand{\meansd}[2]{#1\;\textnormal{\scriptsize({#2})}}
\definecolor{Gray}{gray}{0.90}
\newcolumntype{g}{>{\columncolor{Gray}}c}
\def\eqref#1{equation~\ref{#1}}
\def\1{\bm{1}}
\def\vzero{{\bm{0}}}
\def\vd{{\bm{d}}}
\def\vn{{\bm{n}}}
\def\vp{{\bm{p}}}
\def\vq{{\bm{q}}}
\def\vt{{\bm{t}}}
\def\vv{{\bm{v}}}
\def\vw{{\bm{w}}}
\def\vx{{\bm{x}}}
\def\mI{{\bm{I}}}
\def\mR{{\bm{R}}}
\DeclareMathAlphabet{\mathsfit}{\encodingdefault}{\sfdefault}{m}{sl}
\SetMathAlphabet{\mathsfit}{bold}{\encodingdefault}{\sfdefault}{bx}{n}
\def\gE{{\mathcal{E}}}
\def\gN{{\mathcal{N}}}
\newcommand{\R}{\mathbb{R}}
\newcommand{\Cov}{\mathrm{Cov}}
\DeclareMathOperator*{\argmax}{arg\,max}
\DeclareMathOperator{\sign}{sign}
\theoremstyle{plain}
\newtheorem{theorem}{Theorem}[section]
\newtheorem{corollary}[theorem]{Corollary}
\theoremstyle{definition}
\theoremstyle{remark}
\newtheorem{remark}[theorem]{Remark}
\newcommand{\algname}{\texttt{Panda}} % algorithm name
\newcommand{\vcls}{v_{\text{cls}}}
\newcommand{\vcorr}{v_{\text{corr}}}
\newcommand{\vvcls}{\vv_{\text{cls}}}
\newcommand{\vvcorr}{\vv_{\text{corr}}}
\newcommand{\normalize}{\mathrm{normalize}}
\begin{document}

\maketitle

\begin{abstract}

Pretrained vision-language models exhibit strong zero-shot classification capabilities, but their predictions degrade significantly under common image corruptions. To improve robustness, many test-time adaptation (TTA) methods adopt positive data augmentation (PDA), which generates multiple views of each test sample to reduce prediction variance. However, these methods suffer from two key limitations. First, it introduces considerable computational overhead due to the large number of augmentations required per image. Second, it fails to mitigate prediction bias, where the model tends to predict certain classes disproportionately under corruption, as PDA operates on corrupted inputs and typically does not remove the corruption itself. 
To address these challenges, we propose {\algname}, a novel TTA method based on negative data augmentation (NDA). Unlike positive augmentations that preserve object semantics, {\algname} generates negative augmentations by disrupting semantic content. It divides images into patches and randomly assembles them from a shared patch pool. These negatively augmented images retain corruption-specific features while discarding object-relevant signals. We then subtract the mean feature of these negative samples from the original image feature, effectively suppressing corruption-related components while preserving class-relevant information. This mitigates prediction bias under distribution shifts. Importantly, {\algname} allows augmentation to be shared across samples within a batch, resulting in minimal computational overhead. {\algname} can be seamlessly integrated into existing test-time adaptation frameworks and substantially improve their robustness. Our experiments indicate that {\algname} delivers superior performance compared to PDA methods, and a wide range of TTA methods exhibit significantly enhanced performance when integrated with {\algname}. Our code is available at \url{https://github.com/ruxideng/Panda}.

\end{abstract}

\section{Introduction}

Pretrained vision-language models (VLMs) such as CLIP \cite{clip} have demonstrated strong zero-shot generalization across a wide range of vision tasks \cite{regionclip,denseclip,styleclip}. However, their performance can degrade significantly under image corruptions \cite{corruption}. One major reason is that distribution shifts caused by corruptions introduce prediction bias: the model tends to misclassify corrupted images into specific categories \cite{sar,deyo}. This bias arises because corruption patterns may be inadvertently used as spurious features, leading the model to associate them with particular classes regardless of the underlying object content \cite{corruption,deyo}.

\begin{figure}
    \centering
    \includegraphics[width=\linewidth]{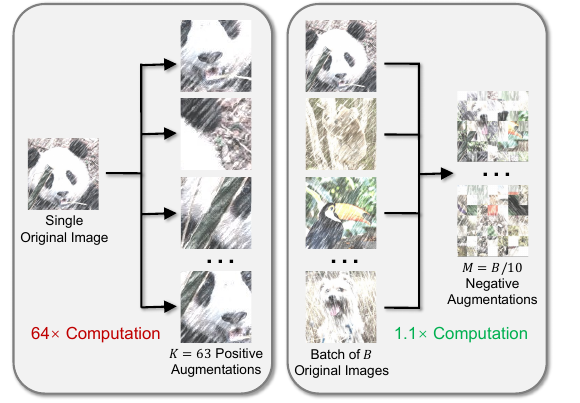}
    \caption{
        Comparison of positive (left) and negative (right) data augmentation.
        Left: PDA used in previous TTA algorithms generates $K$ class-preserving views per image, resulting in high computational cost.
        Right: NDA used in {\algname} generates $M$ class-agnostic corrupted views shared across a batch of $B$ images, incurring minimal overhead.
    }
    \label{fig:pda_vs_nda}
\end{figure}

To mitigate the impact of image corruptions, recent work has explored test-time adaptation (TTA), which adapts the model on-the-fly using only unlabeled test data. Among various strategies, one popular line of methods is based on positive data augmentation (PDA) \cite{augmix}. These methods generate multiple random augmented views for each test image, aggregate the logits to improve prediction robustness \cite{vte,zero} or update the model \cite{memo,tpt,tps}. However, PDA-based TTA suffers from several limitations. In terms of computational efficiency, it requires generating $K$ augmented views independently for each test sample (typically $K=63$), leading to significant increases in test-time cost. In terms of effectiveness, while PDA preserves semantic content, it also tends to retain corruption patterns. Even after averaging multiple augmented views, these corruptions typically persist and can often amplify prediction bias (see our results in Figure \ref{fig:prediction_bias}). Consequently, existing PDA-based TTA methods are generally ineffective in mitigating prediction bias. 

Motivated by these limitations, we propose {\algname}: test-time adaPtAtion with Negative Data Augmentation. Unlike PDA, which preserves object semantics, our negative data augmentation (NDA) intentionally \textit{disrupts semantic content while retaining corruption characteristics}. An illustration is provided in Figure \ref{fig:pda_vs_nda}. Specifically, {\algname} divides all images within a test batch into small patches to create a shared patch pool, then randomly samples patches from this pool to assemble new, recombined negative augmentations. These negatively augmented images obscure object semantics but preserve the underlying corruption patterns. By subtracting the features of these negatively augmented images from the original image embeddings, {\algname} effectively suppresses corruption-related signals while retaining class-relevant information, thus mitigating prediction bias (see Figure \ref{fig:prediction_bias}).

Furthermore, since negative augmentations are shared across samples within the same batch, {\algname} avoids the computational overhead of generating augmentations for each individual image, making it more efficient than PDA-based TTA methods. Additionally, as {\algname} modifies only the forward propagation step, it can be seamlessly integrated with existing TTA frameworks. Our evaluation on standard corruption benchmarks shows that {\algname} consistently enhances performance across various TTA algorithms with minimal computational cost. We summarize our contributions as follows:
\begin{itemize}
    \item We identify that existing TTA methods based on positive data augmentation often incur high inference costs while failing to effectively reduce prediction bias. 
    \item We propose {\algname}, a novel TTA method that leverages negative data augmentation to suppress corruption-related features in image embeddings and significantly reduce prediction bias.
    \item Extensive experiments show that {\algname} achieves better performance than existing PDA-based methods with substantially lower computational cost. In addition, {\algname} can be integrated with most existing TTA algorithms to significantly enhance their robustness.
\end{itemize}

\section{Related Works}

\textbf{Data augmentation} refers to the process of generating additional samples by applying transformations to existing data, and is widely used during model training to improve generalization. Broadly, data augmentation methods can be categorized into positive and negative augmentation. Positive data augmentation (PDA) operates on a single image and aims to preserve its semantic content. Common PDA techniques such as Cutout \cite{cutout} and AugMix \cite{augmix} act as regularizers that encourage robustness to perturbations. In contrast, negative data augmentation (NDA) typically involves combining information from multiple images and often alters their semantics. Examples include MixUp \cite{mixup} and CutMix \cite{cutmix}, which have been shown to further enhance generalization by promoting smoother decision boundaries. It is important to note that these methods are primarily designed for use during model training, rather than at test time.

\textbf{Test-time adaptation} (TTA) adapts a pre-trained model to an unlabeled target domain without accessing source data~\cite{tta_survey,tta_survey_2,matcha}. A prominent line of work, exemplified by Tent~\cite{tent}, performs entropy minimization by updating the model’s normalization layers. Follow-up methods such as ETA/EATA~\cite{eata}, SAR~\cite{sar}, and DeYO~\cite{deyo} improve adaptation stability by incorporating entropy-aware sample selection or weighting strategies. Notably, DeYO also leverages negative data augmentation to guide this process, using the prediction difference between the original image and its negatively augmented counterpart to assess reliability. Training-free TTA methods~\cite{dmn,tda,latte} avoids any model updates and instead modifies predictions directly based on inter-sample similarity. Several TTA approaches incorporate positive data augmentation (PDA) to enhance test-time robustness, typically using AugMix to generate multiple views per image. Methods such as VTE and Zero aggregate predictions across these views to reduce randomness, while MEMO~\cite{memo}, TPT~\cite{tpt}, and TPS~\cite{tps} minimize marginal entropy to enforce consistency across augmented inputs.

% \textbf{Test-time adaptation} (TTA) adapts a pre-trained model to an unlabeled target domain without requiring access to source data~\cite{tta_survey,tta_survey_2}, making it particularly suitable for pretrained VLMs such as CLIP~\cite{clip}. A central line of TTA methods minimizes prediction entropy to encourage confident predictions under distribution shifts. \RX{I'm not sure it's reasonable to say "A central line minimizes prediction entropy" here.} Tent~\cite{tent} updates batch normalization statistics via entropy minimization. Follow-up works like ETA~\cite{eata} and SAR~\cite{sar} improve stability through entropy-aware sample filtering and reweighting. Other methods including TPS~\cite{tps}, TPT~\cite{tpt}, and BAT~\cite{bat} explore patch-wise feature stitching, prompt tuning, and sample dropout strategies to enhance robustness. Memory-based approaches such as DMN~\cite{dmn} store embeddings and pseudo-labels of high-confidence samples, and refine predictions using image-text similarity. Training-free methods like Zero~\cite{zero} apply augmentations and confidence-based selection without updating model parameters. Our method builds upon these foundations and introduces a principled negative data augmentation strategy to mitigate prediction bias and improve test-time adaptation performance.

% Tent \cite{tent}, ETA/EATA \cite{eata}, SAR \cite{sar}, DMN \cite{dmn}, BAT \cite{bat}

\section{Challenges}

\paragraph{Preliminary} CLIP \cite{clip} is a vision-language model (VLM) with an image encoder $\gE_v$ and a text encoder $\gE_t$, which aligns images with their corresponding textual descriptions. By pretraining on a large-scale image-text dataset, CLIP is capable of zero-shot prediction. Specifically, for a classification task with $C$ classes, the text encoder $\gE_t$ embeds class descriptions (e.g., ``\texttt{a photo of a \{class\}}'') into normalized text embeddings $\vt_1, \ldots, \vt_C \in \R^D$. Given a test image, the image encoder $\gE_v$ produces a normalized image feature $\vv_i \in \R^D$, and the prediction is made by assigning the image to the class with the highest similarity score, i.e., $\hat{y}_i = \arg\max_{c}~ \vv_i^\top \vt_c$. 

\begin{figure}
    \centering
    \includegraphics[width=0.9\linewidth]{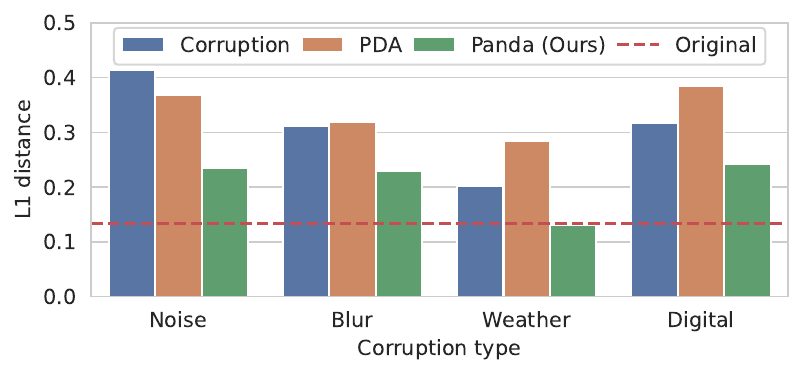}
    \caption{
        Distribution distance between ground-truth and soft prediction distributions under four corruption categories. 
        \textit{Original} denotes the uncorrupted CIFAR-10 dataset. 
        Larger distribution distance indicates greater prediction bias. 
        Corruptions introduce significant bias that positive data augmentation often fails to mitigate. 
        In contrast, {\algname} effectively reduces this bias. 
        See Figure~\ref{fig-l1-all} in Appendix~\ref{appendix-a-analysis} for results on all 15 corruption types.
        }
    \label{fig:prediction_bias}
\end{figure}

\paragraph{Prediction bias from corruptions}
Although CLIP exhibits strong zero-shot generalization capabilities, its classification accuracy often degrades in the presence of common corruptions \cite{corruption}. Corruptions such as Gaussian noise and defocus blur can be encoded into the image embeddings, introducing bias into the representation. When these biases correlate spuriously with text embeddings, they lead to \textit{prediction bias}, where corrupted images are disproportionately assigned to certain classes. To quantify this effect, we evaluate the distribution distance between the ground-truth label distribution and the soft prediction distribution on CIFAR-10-C \cite{cifar,corruption}. A larger distance indicates more severe prediction bias. As shown in Figure \ref{fig:prediction_bias}, each type of corruption significantly increases prediction bias. Prediction bias not only directly reduces classification accuracy but also poses a critical challenge for entropy-based TTA methods. Since these methods treat soft predictions as pseudo-labels, the presence of prediction bias can amplify itself during adaptation and potentially lead to model collapse \cite{sar,tta_pitfall}. Therefore, reducing prediction bias is essential for robust TTA.

\paragraph{PDA fails to alleviate prediction bias}
Many TTA methods \cite{memo,tpt,tps,vte,zero} use positive data augmentation (PDA) \cite{augmix} to improve robustness to image corruptions. These methods generate multiple semantic-preserving views for each test image and average the predictions from high-confidence views. However, as shown in Figure \ref{fig:prediction_bias}, we observe that PDA often fails to alleviate prediction bias and sometimes even amplifies it. This occurs because positive augmentations do not remove the corruption present in the image. In addition, selecting and averaging high-confidence views can unintentionally strengthen the influence of corruption within the image embedding. In comparison, our proposed method {\algname} adopts negative data augmentation (NDA) to reduce prediction bias across all corruption types. In the following section, we describe how we design the NDA process and how its outputs are used to debias image embeddings and adapt the model accordingly.

\section{Algorithm}

\begin{figure*}
    \centering
    \includegraphics[width=1\linewidth]{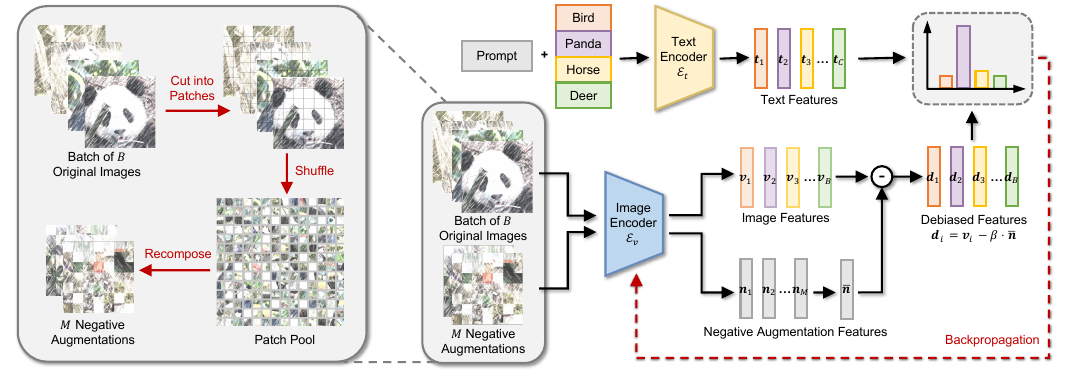}
    \caption{Overview of {\algname}. Given a batch of $B$ original images, $M$ negatively augmented images are generated by cutting the originals into patches, shuffling, and recomposing. Both original and negative augmented images are encoded by the image encoder. The average of the negative embeddings serves as a corruption prototype and is subtracted from the original embeddings to suppress corruption-related features. Final predictions are obtained by comparing the debiased features with text embeddings.}
    \label{fig-panda}
\end{figure*}

Figure~\ref{fig-panda} gives an overview of our proposed algorithm {\algname}, which consists of three main steps: 
\begin{enumerate}
    \item For each test batch of $B$ images, we generate $M \ll B$ negative augmentations. (Subsection \ref{subsec:algo:nda})
    \item We encode both the test samples and the negative augmentations into image embeddings. The average embedding of the negative augmentations is used to offset each test sample, reducing the influence of corruption. (Subsection \ref{subsec:algo:offset}). 
    \item The debiased features are then passed to any existing TTA method to refine predictions or adapt the model. (Subsection \ref{subsec:algo:adapt})
\end{enumerate}

%%%%%%%% %%%%%%%% %%%%%%%% %%%%%%%% %%%%%%%% %%%%%%%% %%%%%%%% %%%%%%%% 

\subsection{Negative Data Augmentation} \label{subsec:algo:nda}

We begin by introducing our negative data augmentation (NDA) strategy. The goal of NDA is to generate images from a batch of test samples that preserve corruption-related information while disrupting object-related content. Specifically, our NDA takes a batch of $B$ images $\{\vx_i\}_{i=1}^B$ as input. Each image of size $H \times W$ is first partitioned into $\frac{H}{H_p} \times \frac{W}{W_p}$ non-overlapping patches of size $H_p \times W_p$. All patches from the batch are collected and randomly shuffled to form a patch pool. From this pool, patches are selected and combined to recompose $M$ \textit{negatively augmented images} $\{\vx^{-}_j\}_{j=1}^M$, where each negative augmentation image is constructed by combining patches such that every patch is used at most once within the batch, and the size of each image remains $H \times W$. These negatively augmented images are fed into the image encoder $\gE_v$ alongside the original batch. 

Compared to PDA, our NDA strategy requires significantly fewer augmentations. PDA typically generates $K$ augmentations for each image in the batch (commonly $K = 63$), resulting in a forward pass cost of $K + 1$ times. In contrast, NDA does not need to preserve object information for individual images, allowing all samples in a batch to share a common set of $M$ negative augmentations. In practice, we typically set $M = B / 10$, which is much smaller than the batch size $B$, leading to substantial computational savings.

%%%%%%%% %%%%%%%% %%%%%%%% %%%%%%%% %%%%%%%% %%%%%%%% %%%%%%%% %%%%%%%% 

\subsection{Offset} \label{subsec:algo:offset}
We feed both the original $B$ test images and the $M$ negatively augmented images into the image encoder, obtaining image embeddings $\{\vv_i = \gE_v(\vx_i)\}_{i=1}^B$ for the original inputs and $\{\vn_j = \gE_v(\vx^-_j)\}_{j=1}^M$ for the negatively augmented images. We aggregate the negatively augmented embedding by computing their average: 
\begin{align}
    \bar{\vn} = \frac{1}{M} \sum_{j=1}^{M} \vn_j, 
\end{align}
Due to the patch-level shuffling and averaging across multiple negatively augmented images, the resulting embedding $\bar\vn$ contains minimal object-related information while retaining corruption-related characteristics. We use $\bar\vn$ to offset the original image embeddings in order to suppress the corruption components: 
\begin{align}
    \vd_i = \vv_i - \beta \cdot \bar\vn, \quad i = 1, \cdots, B,
\end{align}
where $\beta > 0$ is a hyperparameter controlling the offset ratio. We use the debiased embedding $\{\vd_i\}_{i=1}^B$ as a replacement of the original image embedding $\{\vv_i\}_{i=1}^B$. When not combined with other TTA algorithms, the prediction is given by $\hat{y}_i = \argmax_c \vd_i^\top \vt_c$, where $\vt_c$ is the text embedding for class $c$. 

%%%%%%%% %%%%%%%% %%%%%%%% %%%%%%%% %%%%%%%% %%%%%%%% %%%%%%%% %%%%%%%% 

In Theorem \ref{thm:offset} below, we use an one-dimensional example to justify the validity of this offset approach. Note that this conclusion can be generalized to high-dimensional settings. The proof and its high-dimensional generalization are provided in Appendix \ref{appendix-c-proof}.

\begin{restatable}[Offsetting leads to accuracy gain]{theorem}{thmoffset}
\label{thm:offset}
Consider a binary classification problem where the input feature $v$ can be decomposed into two independent components: $v = \vcls + \vcorr$, where $\vcls \sim \gN(0, 1)$ denotes the class-relevant component and $\vcorr \sim \gN(0, s^2)$ denotes the corruption-related component, with $s > 0$ representing corruption severity. Let the ground-truth label be $y = \sign(\vcls)$, and the classifier be $\hat{y} = \sign(v)$. Then the classification accuracy is
\begin{align}
    \Pr(\sign(v) = y) = \frac{1}{2} + \frac{1}{\pi} \cdot \arctan\left(\frac{1}{s}\right).
\end{align}
Now consider a negatively augmented feature $n \sim \gN(0, s^2)$ such that the correlation $\rho(n, \vcls) = 0$ and $\rho(n, \vcorr) = r > 0$. Then offsetting $v$ using $n$ yields improved accuracy:
\begin{align}\begin{split}
    &\Pr(\sign(v - \beta \cdot n) = y)  \\
    &\quad = \frac{1}{2} + \frac{1}{\pi} \cdot \arctan\left(\frac{1}{s \cdot \sqrt{1 - r^2 + (\beta - r)^2}}\right), 
\end{split}\end{align}
which is maximized when $\beta = r$. 
\end{restatable}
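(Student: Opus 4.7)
The plan is to reduce both claims to the standard quadrant probability for centered bivariate Gaussians: if $(X, Y)$ is jointly Gaussian with mean zero and correlation $\rho$, then
\[
\Pr(\sign(X) = \sign(Y)) = \tfrac{1}{2} + \tfrac{1}{\pi}\arcsin(\rho).
\]
This identity (Sheppard's formula) is the key tool; both parts of the theorem then become a single correlation computation followed by a trigonometric simplification.

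For the first claim, I would apply the formula to $(v, \vcls)$. Since $\vcls$ and $\vcorr$ are independent Gaussians with variances $1$ and $s^2$, I immediately get $\Var(v) = 1 + s^2$ and $\Cov(v, \vcls) = 1$, so $\rho(v, \vcls) = 1/\sqrt{1+s^2}$. The stated form then follows from the elementary identity $\arcsin(1/\sqrt{1+s^2}) = \arctan(1/s)$ for $s > 0$, which is obtained by drawing a right triangle with opposite side $1$ and adjacent side $s$.

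For the second claim, assuming $(\vcls, \vcorr, n)$ are jointly Gaussian (so that $(v-\beta n, \vcls)$ is bivariate Gaussian), I would compute the variance and covariance of $u := v - \beta n$ with $\vcls$. Using $\Cov(n, \vcls) = 0$, $\Cov(n, \vcorr) = r s^2$, and independence of $\vcls$ from $\vcorr$ and $n$, direct expansion yields
\[
\Var(u) = 1 + s^2 + \beta^2 s^2 - 2\beta r s^2, \qquad \Cov(u, \vcls) = 1.
\]
The key algebraic rewrite is $1 + \beta^2 - 2\beta r = (1 - r^2) + (\beta - r)^2$, which gives $\Var(u) = 1 + s^2[(1-r^2) + (\beta-r)^2]$. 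Applying Sheppard's formula with $\rho(u, \vcls) = 1/\sqrt{\Var(u)}$ and the same $\arcsin$--$\arctan$ identity (now with $s$ replaced by $s\sqrt{(1-r^2) + (\beta-r)^2}$) gives the stated expression.

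The maximization is then immediate: $\arctan$ is strictly increasing, so maximizing the accuracy reduces to minimizing $(1-r^2) + (\beta-r)^2$, which occurs at $\beta = r$. The main point requiring care is the justification of Sheppard's quadrant-probability formula (either by a polar-coordinate integration of the standard bivariate Gaussian density, or by writing $\Pr(\sign X = \sign Y) = 1 - 2\Pr(X>0, Y<0)$ and invoking the known value $\Pr(X>0, Y<0) = \tfrac{1}{4} - \tfrac{1}{2\pi}\arcsin(\rho)$), together with ensuring that joint Gaussianity of $(\vcls, \vcorr, n)$ is available, since the theorem specifies only the marginal of $n$ and its correlations with $\vcls$ and $\vcorr$.
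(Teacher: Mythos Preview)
Your proposal is correct and takes a genuinely different route from the paper. The paper reparametrizes to independent standard normals $z_1=\vcls$, $z_2=\vcorr/s$, $z_3=(n-r\,\vcorr)/(s\sqrt{1-r^2})$, and for the no-offset case computes $\Pr(z_1+s z_2>0\mid z_1>0)$ directly in polar coordinates, obtaining the $\arctan$ expression from the angular condition $\tan\theta>-1/s$; for the offset case it recombines $z_2,z_3$ into a new standard normal $z_4$ independent of $z_1$, reducing to the same calculation with $s$ replaced by $s\sqrt{1-2\beta r+\beta^2}$. By contrast, you treat Sheppard's quadrant formula as a single lemma and reduce both parts to a correlation computation plus the identity $\arcsin(1/\sqrt{1+t^2})=\arctan(1/t)$. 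Your argument is shorter and more modular, while the paper's is more self-contained (its polar-coordinate step is effectively a from-scratch proof of Sheppard's formula in this special case). Your caveat about joint Gaussianity is well placed; the paper's reparametrization and independence claims rely on it just as much as your application of Sheppard's formula does.
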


% \begin{theorem}[Offsetting leads to accuracy gain] \label{thm:offset}
% Consider a binary classification problem where the input feature $v$ can be decomposed into two independent components: $v = \vcls + \vcorr$, where $\vcls \sim \gN(0, 1)$ denotes the class-relevant component and $\vcorr \sim \gN(0, s^2)$ denotes the corruption-related component, with $s > 0$ representing corruption severity. Let the ground-truth label be $y = \sign(\vcls)$, and the classifier be $\hat{y} = \sign(v)$. Then the classification accuracy is
% \begin{align}
%     \Pr(\sign(v) = y) = \frac{1}{2} + \frac{2}{\pi} \cdot \arctan\left(\frac{1}{s}\right).
% \end{align}

% Now consider a negatively augmented feature $n \sim \gN(0, s^2)$ such that the correlation $\rho(n, \vcls) = 0$ and $\rho(n, \vcorr) = r > 0$. Then offsetting $v$ using $n$ yields improved accuracy:
% \begin{align}\begin{split}
%     &\Pr(\sign(v - \beta \cdot n) = y)  \\
%     &\quad\quad = \frac{1}{2} + \frac{2}{\pi} \cdot \arctan\left(\frac{1}{s \cdot \sqrt{1 - r^2 + (\beta - r)^2}}\right), 
% \end{split}\end{align}
% which is maximized when $\beta = r$. 
% \end{theorem}

Theorem~\ref{thm:offset} shows that when the negative augmentation effectively removes object information and retains part of the corruption characteristics, our offset strategy can suppress the corruption component in the original feature down to a minimum of $\sqrt{1 - r^2}$ times its original magnitude, leading to improved accuracy. This result highlights the effectiveness of using negative augmentation to isolate and suppress corruption in the feature space. 

%%%%%%%% %%%%%%%% %%%%%%%% %%%%%%%% %%%%%%%% %%%%%%%% %%%%%%%% %%%%%%%% 

\renewcommand{\algorithmicrequire}{\textbf{Input:}}
\renewcommand{\algorithmicensure}{\textbf{Output:}} 

\begin{algorithm}[H]
    \caption{Tent \begin{tikzpicture}[remember picture, overlay]
            \draw[line width=0pt, draw=blue!30, rounded corners=2pt, fill=blue!30, fill opacity=0.2]
                ($(pic cs:a) + (-1pt,8pt)$) rectangle ($(pic cs:b)+(40pt,-3pt)$);
            \end{tikzpicture}+ {\algname}} \label{algo-tentourss}
    \small
    \begin{algorithmic}[1]
        \REQUIRE Test data stream $\{\mathcal{X}_t\}_{t=1}^T$ with $\mathcal{X}_t = \{\vx_i\}_{i=1}^B$; image encoder $\gE_v(\cdot; \vw)$ with parameters $\vw$ (e.g., LayerNorm); text embeddings $\{\vt_c\}_{c=1}^C$; learning rate $\eta$; patch size $H_p \times W_p$; offset ratio $\beta$; number of negative augmentations $M = \lceil B/10 \rceil$
        \ENSURE Predictions $\{\hat{y}_i\}_{i=1}^B$ for each batch

        \FOR{each test batch $\mathcal{X}_t$}
            \STATE \begin{tikzpicture}[remember picture, overlay]
            \draw[line width=0pt, draw=blue!30, rounded corners=2pt, fill=blue!30, fill opacity=0.2]
                ($(pic cs:a) + (-5pt,8pt)$) rectangle ($(pic cs:b)+(214pt,-23pt)$);
            \end{tikzpicture} \textcolor{gray}{\textit{\# Negative data augmentation}}
            \STATE Generate $M$ negative augmentations $\{\vx_j^-\}_{j=1}^M$ by cuting $\vx_i \in \mathcal{X}_t$ into patches, shuffling, and recomposing
            \STATE \textcolor{gray}{\textit{\# Feature encoding}}
            \STATE $[\vv_1, \cdots, \vv_B] \leftarrow \normalize\left(\gE_v([\vx_1, \cdots, \vx_B]; \vw)\right)$
            \STATE \begin{tikzpicture}[remember picture, overlay]
            \draw[line width=0pt, draw=blue!30, rounded corners=2pt, fill=blue!30, fill opacity=0.2]
                ($(pic cs:a) + (-5pt,8pt)$) rectangle ($(pic cs:b)+(214pt,-35pt)$);
            \end{tikzpicture}$[\vn_1, \cdots, \vn_M] \leftarrow \normalize\left(\gE_v([\vx_1^-, \cdots, \vx_M^-]; \vw)\right)$
            \STATE \textcolor{gray}{\textit{\# Feature debiasing}}
            \STATE $\bar{\vn} \leftarrow \frac{1}{M} \sum_{j=1}^{M} \vn_j$
            \STATE $\vd_i \leftarrow \vv_i - \beta \cdot \bar{\vn}, \text{ for } i = 1, \cdots, B$
            \STATE \textcolor{gray}{\textit{\# Adaptation}}
            \STATE $\text{logits}_i \leftarrow 100 \cdot \vd_i^\top [\vt_1, \cdots, \vt_C], \text{ for } i = 1, \cdots, B$
            \STATE $\mathcal{L} \leftarrow \frac{1}{B} \sum_{i=1}^{B} \mathcal{H}(\text{logits}_i)$
            \STATE $\vw \leftarrow \vw - \eta \cdot \nabla_{\vw} \mathcal{L}$ \hfill 
            \textcolor{gray}{\textit{\# update parameters of $\gE_v$}}
            \STATE $\hat{y}_i \leftarrow \argmax_c (\vd_i^\top \vt_c)$ for $i = 1, \cdots, B$
        \ENDFOR
        \STATE \textbf{Output} $\{\hat{y}_i\}_{i=1}^B$
    \end{algorithmic}
\end{algorithm}

\renewcommand{\meansd}[2]{#1\scriptsize\ ({#2})}

\begin{table*}
    \centering
    % \resizebox{1.0\linewidth}{!}{
    \small
    \setlength{\tabcolsep}{1.4mm}{
        \begin{tabular}{lccccccccccc}
            \toprule
             & & CLIP & Tent & ETA & SAR & DeYO & TPT & DMN-ZS & Zero & TPS & BAT \\
            \midrule
            % \multirow{3}{*}{ViT-B/32 on CIFAR-10-C} 
            \multirow{3}{*}{\makecell[c]{ViT-B/32 on \\CIFAR-10-C}}
            & Baseline 
            & \meansd{59.0}{0.0} & \meansd{62.8}{0.1} & \meansd{64.9}{0.1} & \meansd{63.3}{0.3} & \meansd{65.5}{0.0} 
            & \meansd{62.2}{0.1} & \meansd{61.6}{0.0} & \meansd{63.2}{0.0} & \meansd{63.7}{0.0} & \meansd{65.7}{0.0} \\
            & +{\algname} 
            & \meansd{61.6}{0.0} & \meansd{71.1}{0.1} & \meansd{68.3}{0.1} & \meansd{70.7}{0.2} & \meansd{67.2}{0.1} 
            & \meansd{63.5}{0.0} & \meansd{63.1}{0.1} & \meansd{64.9}{0.0} & \meansd{65.6}{0.0} & \meansd{68.5}{0.0} \\
            & $\Delta$ 
            & +2.6 & +8.3 & +3.4 & +7.4 & +1.7 & +1.3 & +1.5 & +1.7 & +1.9 & +2.8 \\
            \midrule
            \multirow{3}{*}{\makecell[c]{ViT-B/32 on \\CIFAR-100-C}}
            & Baseline 
            & \meansd{31.8}{0.0} & \meansd{35.7}{0.1} & \meansd{40.8}{0.4} & \meansd{39.1}{0.3} & \meansd{38.0}{0.3} 
            & \meansd{32.1}{0.0} & \meansd{32.2}{0.1} & \meansd{29.5}{0.1} & \meansd{33.1}{0.1} & \meansd{36.5}{0.0} \\
            & +{\algname} 
            & \meansd{33.4}{0.0} & \meansd{38.4}{0.1} & \meansd{43.3}{0.1} & \meansd{41.7}{0.2} & \meansd{42.1}{0.2} 
            & \meansd{34.3}{0.0} & \meansd{34.2}{0.1} & \meansd{33.3}{0.0} & \meansd{35.1}{0.0} & \meansd{37.7}{0.0} \\
            & $\Delta$ 
            & +1.6 & +2.7 & +2.5 & +2.6 & +4.1 & +2.2 & +2.0 & +3.8 & +2.0 & +1.2 \\
            \midrule
            \multirow{3}{*}{\makecell[c]{ViT-B/16 on \\ImageNet-C}}
            & Baseline 
            & \meansd{24.5}{0.0} & \meansd{25.3}{0.0} & \meansd{26.5}{0.0} & \meansd{31.8}{0.2} & \meansd{29.0}{0.2} 
            & \meansd{25.2}{0.0} & \meansd{24.6}{0.0} & \meansd{24.6}{0.1} & \meansd{25.1}{0.1} & \meansd{25.6}{0.0} \\
            & +{\algname} 
            & \meansd{26.2}{0.0} & \meansd{28.2}{0.1} & \meansd{27.9}{0.0} & \meansd{32.4}{0.2} & \meansd{31.2}{0.2} 
            & \meansd{27.1}{0.0} & \meansd{26.2}{0.0} & \meansd{27.1}{0.0} & \meansd{27.2}{0.0} & \meansd{27.2}{0.1} \\
            & $\Delta$ 
            & +1.7 & +2.9 & +1.4 & +0.6 & +2.2 & +1.9 & +1.6 & +2.5 & +2.1 & +1.6 \\
            \bottomrule
        \end{tabular}
    }
    \caption{Comparison of accuracy (mean (s.d.) \%) between the single baseline method and the baseline integrated with {\algname}. Only the average accuracy over 15 corruption types is reported; full per-corruption results are deferred to Appendix~\ref{appendix-d-exp}.}
    \label{table-main}
    % }
\end{table*}

\subsection{Combination with TTA Methods} \label{subsec:algo:adapt}
{\algname} modifies only the forward pass with minimal computational overhead, which makes it compatible with most existing TTA methods. For instance, Algorithm 1 gives an example when {\algname} is integrated with Tent \cite{tent}. The standard objective minimizes the entropy of predictions based on the original image features $\vv_i$. In our method, we instead use the debiased features $\vd_i$ to replace $\vv_i$ in the forward computation. This substitution reduces bias in the logits, thereby improving both prediction quality and adaptation stability. As shown in our experiments, existing TTA algorithms can achieve better performance when combined with {\algname}.

\section{Experiments}

In this section, we conduct experiments to investigate the following research questions:
\begin{itemize}
    \item \textbf{RQ1}: Can {\algname} enhance the performance of existing TTA methods significantly in diverse datasets?
    \item \textbf{RQ2}: Compared to PDA methods, does {\algname} achieve better performance with higher efficiency? 
    \item \textbf{RQ3}: Does {\algname} effectively alleviate prediction bias? 
\end{itemize}

%%%%%%%% %%%%%%%% %%%%%%%% %%%%%%%% %%%%%%%% %%%%%%%% %%%%%%%% %%%%%%%% 

\paragraph{Setup}
We conduct experiments on three widely used corruption benchmarks: CIFAR-10-C, CIFAR-100-C, and ImageNet-C, each containing 15 types of common corruptions. Following standard practice \cite{bat,mint}, we evaluate our method at corruption severity level 5. For the backbone, we use ViT-B/32 for CIFAR-10-C and CIFAR-100-C, and ViT-B/16 for ImageNet-C. We adopt a default batch size of 100 for adaptation and use a fixed prompt template (``\texttt{a photo of a \{class\}}'') for the text encoder. 
All images from the corruption datasets are resized to $224 \times 224$ to match the input resolution required by the ViT backbone. We use a default patch size of $H_p = W_p = 32$, meaning each original image is partitioned into $7 \times 7$ non-overlapping patches (i.e., $224 / 32 = 7$ per dimension) during negative augmentation.

\paragraph{Baselines}
Besides CLIP \cite{clip}, we assess the effectiveness of {\algname} in conjunction with nine different TTA baselines, covering diverse adaptation types and models designs. For general TTA methods, we consider Tent \cite{tent}, ETA \cite{eata}, SAR \cite{sar}, and DeYO \cite{deyo}. For CLIP-specific TTA methods, we compare both model-adaptive approaches including TPT \cite{tpt}, TPS \cite{tps}, and BAT \cite{bat}, and training-free methods including DMN-ZS \cite{dmn} and Zero \cite{zero}. Among them, TPT, TPS, and Zero adopt AugMix \cite{augmix} as their positive data augmentation strategy. 
For each baseline, in addition to evaluating the method itself, we also report a combined version that integrates {\algname} by replacing the original image features $\vv_i$ with the debiased features $\vd_i = \vv_i - \beta \cdot \bar{\vn}$ during both logit computation and adaptation.
Unless otherwise specified, all hyperparameters and experimental protocols are optimized using the original baseline alone and kept identical for its {\algname}-augmented counterpart.
Please refer to Appendix~\ref{appendix-d-exp} for full implementation details and hyperparameter values. 

%%%%%%%% %%%%%%%% %%%%%%%% %%%%%%%% %%%%%%%% %%%%%%%% %%%%%%%% %%%%%%%% 
\renewcommand{\meansd}[2]{#1\scriptsize\ ({#2})}

\begin{table}
    \centering
    \label{tab:acc:main_transposed_full}
    % \resizebox{1.0\linewidth}{!}{
    \small
    \setlength{\tabcolsep}{2mm}{
        \begin{tabular}{lccc}
            \toprule
            Method & CIFAR-10-C & CIFAR-100-C & ImageNet-C \\
            \midrule
            TPT 
            & \meansd{62.2}{0.1} 
            & \meansd{32.1}{0.0} 
            & \meansd{25.2}{0.0} \\
            Zero 
            & \meansd{63.2}{0.0} 
            & \meansd{29.5}{0.1} 
            & \meansd{24.6}{0.1} \\
            TPS 
            & \meansd{63.7}{0.0} 
            & \meansd{33.1}{0.1} 
            & \meansd{25.1}{0.1} \\
            {\algname} 
            & \meansd{\textbf{71.1}}{0.1} 
            & \meansd{\textbf{38.4}}{0.1} 
            & \meansd{\textbf{28.2}}{0.1} \\
            \bottomrule
        \end{tabular}
    }
    \caption{Comparison of mean accuracy (\%) between {\algname} and PDA methods (TPT, Zero, and TPS). Only the average accuracy over 15 corruption types is reported; full per-corruption results are deferred to Appendix~\ref{appendix-d-exp}.}
    \label{table-pda}
    % }
\end{table}

\begin{table*}
\centering
\small
\begin{tabular}{lcccccccccc}
\toprule
Method & CLIP & Tent & ETA & SAR & DeYO & TPT & DMN & Zero & TPS & BAT \\
\midrule
Baseline Time & 17s & 25s & 21s & 31s & 27s & 22min21s & 22s & 8min51s & 9min32s & 28s \\
+{\algname} Time & 18s & 27s & 23s & 34s & 28s & 22min39s & 23s & 8min55s & 9min37s & 30s \\
Overhead & 5.9\% & 8.0\% & 9.5\% & 9.7\% & 3.7\% & 1.3\% & 4.5\% & 0.8\% & 0.9\% & 7.2\% \\
\bottomrule
\end{tabular}
\caption{Comparison of testing time with baselines and baselines+{\algname} for ViT-B/32 on CIFAR-10.}
\label{table-time}
\end{table*}

% \paragraph{Combination with existing TTA methods (RQ1)} We adopt ten representative and diverse TTA methods, which are CLIP~\cite{clip}, Tent~\cite{tent}, ETA~\cite{eta}, SAR~\cite{sar}, DeYO~\cite{deyo}, TPT~\cite{tpt}, DMN-ZS~\cite{dmn}, Zero~\cite{zero}, TPS~\cite{tps}, and BAT~\cite{bat} as baselines, aiming to demonstrate that {\algname} can be effectively integrated with the most of existing TTA approaches to achieve substantial improvements. From Table~\ref{table-main}, we find that all baselines benefit significantly from the integration of {\algname} across various datasets. Additionally, baselines that involve backpropagation-based adaptation tend to achieve even greater performance gains when combined with {\algname}.

\paragraph{Combination with existing TTA methods (RQ1)} We evaluate the effectiveness of {\algname} by integrating it into a wide range of existing TTA methods across three corruption benchmarks. As shown in Table~\ref{table-main}, {\algname} consistently improves the performance of all baselines across all datasets, demonstrating its strong compatibility with existing methods and its ability to effectively leverage high-quality negative augmentations to suppress spurious features in original images under various settings. 

On CIFAR-10-C, {\algname} brings substantial gains to general TTA methods such as Tent (+8.3\%), ETA (+3.4\%), and SAR (+7.4\%), with an average improvement of +3.3\% across all baselines. On CIFAR-100-C, {\algname} achieves an average improvement of +2.2\% over all baselines, with the highest gain of +4.1\%. On ImageNet-C, it brings an average improvement of +2.0\%, with a maximum gain of +2.9\%.

Notably, {\algname} can also be integrated with PDA methods and achieve better performance, as evidenced by improvements on TPT (+2.2\%), TPS (+3.8\%), and Zero (+2.0\%) on CIFAR-100-C. In addition, {\algname} can also be integrated with the NDA method DeYO to enhance its ability, which as shown by improvements on DeYO (+4.1\%) on CIFAR-100-C.

%%%%%%%% %%%%%%%% %%%%%%%% %%%%%%%% %%%%%%%% %%%%%%%% %%%%%%%% %%%%%%%% 

\paragraph{Compared with PDA methods (RQ2)} Beyond the distributional distance analysis, we conduct controlled experiments under identical settings to compare {\algname} with positive data augmentation  methods, and observe from Table~\ref{table-pda} that {\algname} significantly outperforms all PDA baselines (TPT~\cite{tpt}, Zero~\cite{zero}, and TPS~\cite{tps}) by notably reducing prediction bias and achieving higher accuracy.

\paragraph{Efficiency (RQ2)} We evaluate the computational overhead introduced by integrating {\algname} by comparing runtime of baseline methods with and without {\algname}. From Table~\ref{table-time}, incorporating {\algname} results in less than a 10\% increase in runtime compared to their standalone counterparts, which demonstrates that {\algname} achieves performance improvements with minimal efficiency overhead. Moreover, in contrast to positive data augmentation methods such as TPS, TPT, and Zero that generate numerous independent augmented views for each test sample, {\algname} with only a small number of negative augmentations consequently achieves substantially lower computational cost and runtime.

%%%%%%%% %%%%%%%% %%%%%%%% %%%%%%%% %%%%%%%% %%%%%%%% %%%%%%%% %%%%%%%% 

\begin{figure}
    \centering
    \includegraphics[width=1\linewidth]{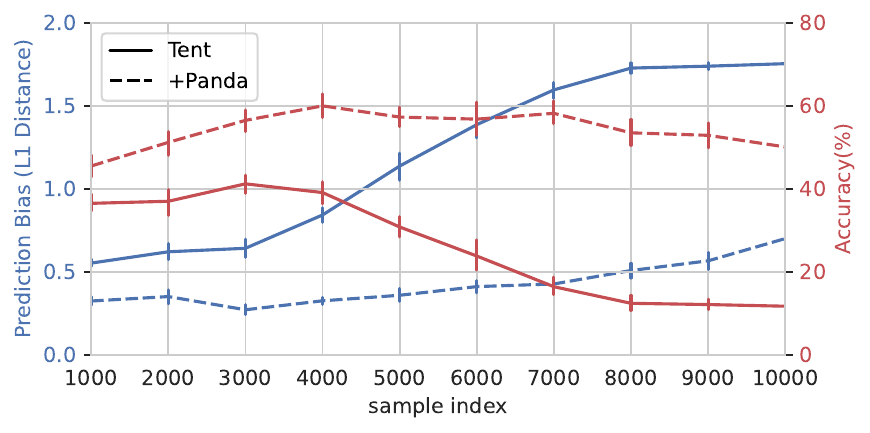}
    \caption{Prediction bias and accuracy (\%) measured across the test stream, divided into 10 consecutive chunks (each of 1,000 samples).}
    \label{fig-bias-evolve}
\end{figure}

\paragraph{Alleviate prediction bias (RQ3)}
In Figure~\ref{fig:prediction_bias} and Figure~\ref{fig-l1-all} in Appendix~\ref{appendix-a-analysis}, we have verified that applying {\algname} alone can effectively reduce the model's prediction bias. Here we further investigate whether {\algname} can alleviate prediction bias when combined with other TTA methods. To this end, we choose Tent \cite{tent} as a baseline, as it is known to suffer from prediction bias \cite{sar}. We conduct experiments on the gaussian noise corruption from CIFAR-10-C. Fixing the data order, we divide the 10,000 test samples into 10 consecutive chunks (1–1000, 1001–2000, …). For each chunk, we compute (1) the prediction bias, measured as the L1 distance between the ground-truth label distribution and the soft prediction distribution, and (2) the classification accuracy. 
As shown in Figure~\ref{fig-bias-evolve}, Tent gradually accumulates prediction bias as more test samples are processed. As a result, its accuracy improves slightly only at the beginning but quickly degrades, eventually leading to model collapse. In contrast, Tent + {\algname} maintains consistently lower prediction bias and achieves substantially higher accuracy throughout the entire test stream. These results highlight that {\algname} not only alleviates prediction bias in a static sense but also mitigates its accumulation throughout the test-time adaptation process, leading to improved performance.

%%%%%%%% %%%%%%%% %%%%%%%% %%%%%%%% %%%%%%%% %%%%%%%% %%%%%%%% %%%%%%%% 

\renewcommand{\meansd}[2]{#1\scriptsize\ ({#2})}

\begin{table}
    \centering
    \label{tab:acc:main_transposed}
    % \resizebox{1.0\linewidth}{!}{
    \small
    \setlength{\tabcolsep}{1mm}{
        \begin{tabular}{lccc}
            \toprule
            Method & CIFAR-10-C & CIFAR-100-C & ImageNet-C \\
            \midrule
            CLIP 
            & \meansd{59.0}{0.0} 
            & \meansd{31.8}{0.0} 
            & \meansd{24.5}{0.0} \\
            Select \& weight
            & \meansd{65.5}{0.0} 
            & \meansd{38.0}{0.3} 
            & \meansd{29.0}{0.2} \\
            Offset (Ours)
            & \meansd{\textbf{68.3}}{0.0} 
            & \meansd{\textbf{43.3}}{0.1} 
            & \meansd{\textbf{29.4}}{0.0} \\
            \midrule
        \end{tabular}
    }
    \caption{Comparison of mean accuracy (\%) between  for negative augmentation strategies in DeYO and {\algname}. Selecting and weighting testing samples using NDA is from DeYO and offseting bias features is from {\algname}. Only the average accuracy over 15 corruption types is reported; full per-corruption results are deferred to Appendix~\ref{appendix-d-exp}.}
    \label{table-deyo}
    % }
\end{table}

\begin{figure*}[t]
    \centering
    \includegraphics[width=1.0\linewidth]{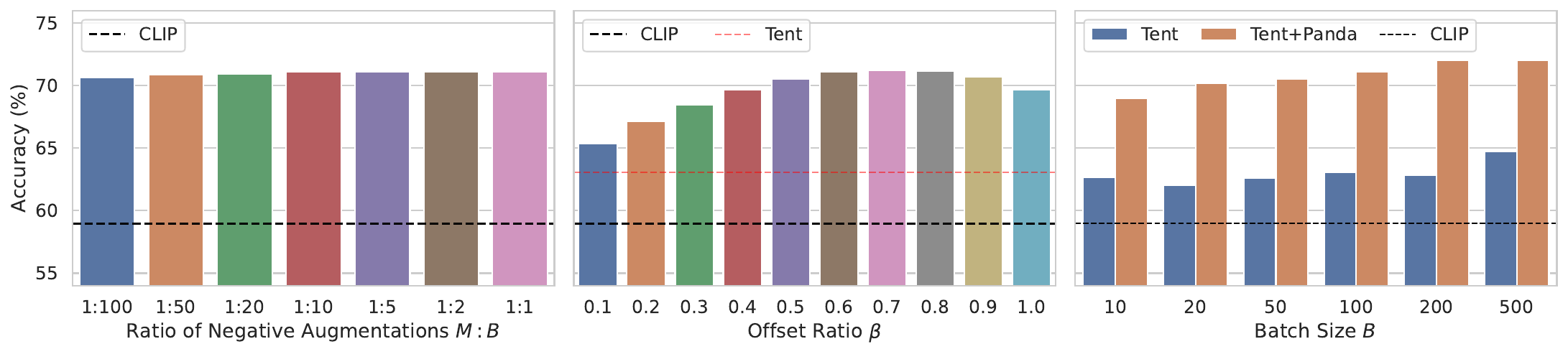}
    \caption{Sensitivity analysis of {\algname}. Left: accuracy under different ratios of $M:B$, where $M$ is the number of negative augmentations per batch and $B$ is the batch size. 
Middle: accuracy across a range of offset ratios $\beta$ used in patch translation. 
Right: accuracy under varying batch sizes, comparing Tent and Tent+{\algname}.}
    \label{fig-mb-offset-batch}
\end{figure*}

\begin{figure*}[t]
    \centering
    \includegraphics[width=\linewidth]{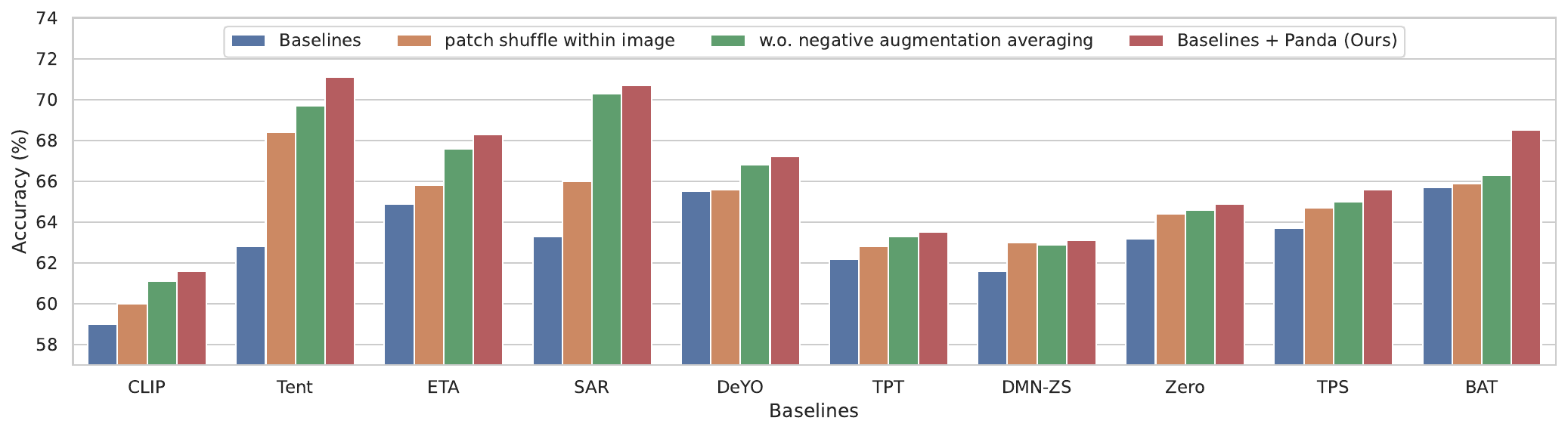}
    \caption{Ablation study comparing the full version of {\algname} with its decomposed variants across different TTA baselines. 
Variants include disabling {\algname}, shuffling patches within individual images, and removing augmentation feature averaging.}
    \label{fig-ablation}
\end{figure*}

\paragraph{Comparison of negative augmentation strategies with other NDA methods}
To evaluate the effectiveness of the negative data augmentation used in {\algname}, we compare it with a strong NDA-based baseline, DeYO~\cite{deyo}. DeYO estimates prediction confidence by measuring the discrepancy between predictions on original and negatively augmented images, and uses this confidence to guide sample selection and weighting during adaptation.
For a fair comparison, we remove both the NDA-relevant components from DeYO, and instead adopt the NDA generation and offset mechanism used in {\algname}. As shown in Table~\ref{table-deyo}, the negative augmentation strategy employed by {\algname} achieves superior performance. This result demonstrates that {\algname} produces higher-quality negative augmentations that more effectively suppress prediction bias on corrupted data, outperforming existing state-of-the-art NDA approaches in test-time adaptation.

\paragraph{Ratio between $B$ and $M$}
$B$ denotes the batch size and $M$ represents the number of negative augmentation images generated per batch. We investigate whether the performance of {\algname} degrades significantly as the $M/B$ ratio decreases. As shown in Figure~\ref{fig-mb-offset-batch} (left), {\algname} maintains stable and strong performance across different $M/B$ settings. The consistently high performance under extremely low $M/B$ ratios indicates that generated negative augmentation images exhibit strong information-sharing capacity, allowing a small number of augmentations to benefit a large number of samples effectively.

\paragraph{Hyperparameter sensitivity}
We conduct sensitivity analyses of {\algname} with respect to the offset ratio $\beta$, batch size, and learning rate. As shown in Figure~\ref{fig-mb-offset-batch} (middle), {\algname} consistently yields significantly better performance across a wide range of offset ratio $\beta$. This shows that {\algname} can consistently enhance the performance of baselines across a wide range of offset ratio $\beta$, without relying on a specific value. Moreover, as illustrated in Figure~\ref{fig-mb-offset-batch} (right), when combined with {\algname}, Tent maintains a consistently large performance gain over the original Tent baseline across different batch sizes. This indicates that {\algname} can effectively enhance the performance of the base method under varying batch size settings. Detailed results for other baselines under different learning rates are provided in Appendix~\ref{appendix-d-exp}.

\paragraph{Ablation study} 
We conduct an ablation study by decomposing components of {\algname}. Specifically, we compare the full version of {\algname} with several variants: (1) disabling {\algname} entirely (2) performing negative augmentation based on the image itself rather than across the whole batch of images, and (3) full version of {\algname} without the averaging of negative augmentation features. As shown in Figure~\ref{fig-ablation}, results across all baselines consistently demonstrate that these ablated variants perform significantly worse than the full version of {\algname}. This demonstrates that batch-wide negative augmentation generation and feature averaging are both essential for {\algname}'s performance gains.

\section{Conclusion}
In this work, we introduce {\algname}, a novel test-time adaptation method that leverages negative data augmentation to mitigate prediction bias caused by image corruptions. Unlike traditional positive augmentation strategies, {\algname} generates negative augmentations by disrupting object semantics through patch shuffling, effectively preserving corruption-specific characteristics while suppressing object-relevant features. By aggregating features from negatively augmented images, our approach offsets the corruption-induced bias in test samples and significantly reduces computational overhead by enabling shared augmentations within each batch. Extensive experiments on standard corruption benchmarks demonstrate that {\algname} consistently outperforms positive data augmentation methods and robustly enhances the performance of various TTA frameworks. Our results highlight the practical effectiveness and efficiency of negative data augmentation for robust vision-language model adaptation.

\section*{Acknowledgement}

This work is supported by National Science Foundation under Award No. IIS-2416070, IIS-2117902. The views and conclusions are those of the authors and should not be interpreted as representing the official policies of the funding agencies or the government.

\bibliography{main}

\appendix

\clearpage
\onecolumn

\section*{Appendix}

\section{Additional Analysis} \label{appendix-a-analysis}

\subsection{Distributional Distance}

To quantify the prediction bias under distribution shift, we compute the distributional distance between the predicted label distribution $ \vp \in \mathbb{R}^C $ and the ground-truth label distribution $ \vq \in \mathbb{R}^C $. We adopt the L1 distance as a simple yet effective metric for measuring this discrepancy:
\begin{align}
\mathrm{L1}(\vp, \vq) = \sum_{c=1}^C |p_c - q_c|.
\end{align}

The ground-truth distribution $ \vq = (q_1, \ldots, q_C) $ is computed by averaging the one-hot encoded labels over the entire test set:
\begin{align}
q_c = \frac{1}{N} \sum_{i=1}^{N} \mathbb{I}(y_i = c), \quad c = 1, \ldots, C,
\end{align}
where $ y_i \in \{1, \ldots, C\} $ is the ground-truth label for the $ i $-th image.

The predicted distribution $ \vp = (p_1, \ldots, p_C) $ is obtained by averaging the softmax outputs over all test samples. Specifically, the logits for each image $ i $ are computed as

\begin{align}
\text{logit}_{i, c} = 100 \cdot \vd_i^\top \vt_c
\end{align}
where $ \vd_i \in \mathbb{R}^D $ is the debiased image feature, and $ \vt_c \in \mathbb{R}^D $ is the text embedding for class $ c $. The predicted probability for class $ c $ is then:
\begin{align}
p_c = \frac{1}{N} \sum_{i=1}^{N} \frac{\exp(\text{logit}_{i, c})}{\sum_{k=1}^{C} \exp(\text{logit}_{i, k})}.
\end{align}

We compute $ \mathrm{L1}(\vp, \vq) $ on ViT-B/32 across all 15 corruption types in CIFAR-10-C. A larger value of this metric indicates a greater deviation of the model’s predicted class distribution from the true label distribution, thus reflecting a higher level of prediction bias under corruption.

As shown in Figure~\ref{fig-l1-all}, the results of the L1 distance reveal the following key conclusions:
(1) When transitioning from raw images (CIFAR-10) to corrupted images (CIFAR-10-C), the distributional distances increase significantly, indicating a substantial rise in prediction bias under corruption.
(2) After applying positive data augmentation (PDA) methods, distributional distances decrease in 4 corruption domains, while they increase to varying degrees in the remaining 11 domains. This suggests that PDA methods fail to consistently reduce prediction bias and may even exacerbate it under certain corruption types.
(3) In contrast, applying {\algname} leads to a consistent and significant reduction in distributional distance across all 15 corruption domains, demonstrating its effectiveness in mitigating prediction bias when recognizing corrupted images.\\

\begin{figure}[!ht]
    \centering
    \includegraphics[width=0.9\linewidth]{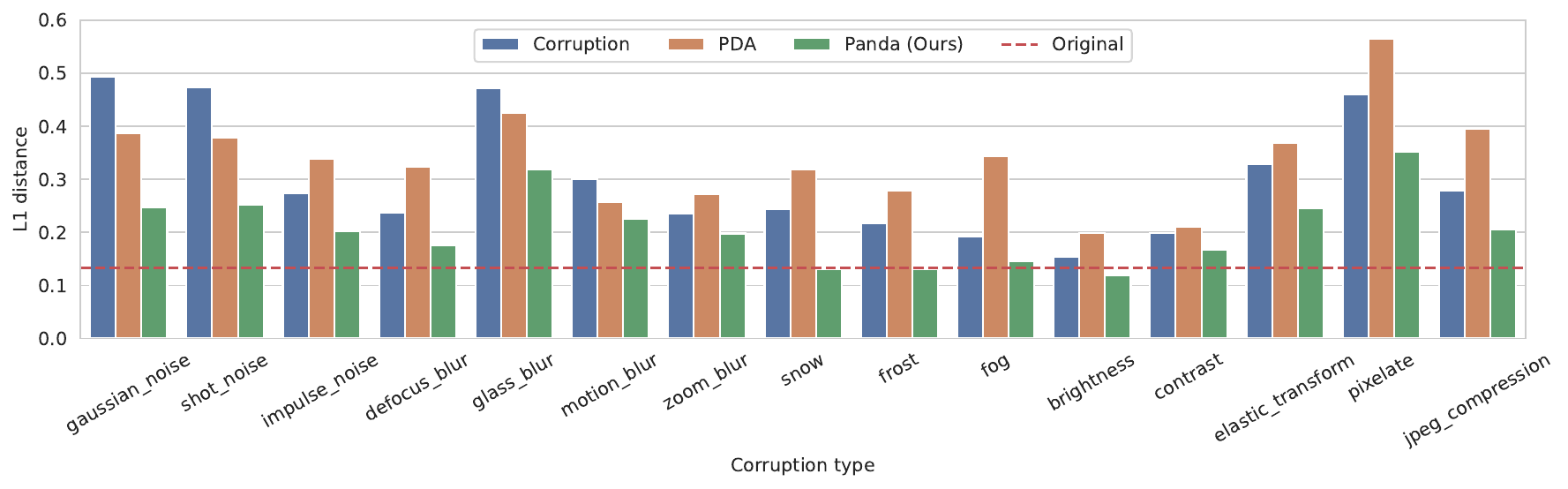}
    \caption{
        L1 distance between soft predictions and ground-truth labels for CLIP, PDA, and {\algname} in 15 types of corruptions under four corruption categories on CIFAR-10-C. Corruptions significantly introduce prediction bias, which positive data augmentation often fail to mitigate. In contrast, our proposed {\algname} effectively reduces such bias.}
    \label{fig-l1-all}
\end{figure}

\newpage

\subsection{Histograms}

To more clearly illustrate the ability of {\algname} to reduce prediction bias on corrupted datasets, we select all corruption domains and visualize the predictions among domains in CIFAR-10-C. In CIFAR-10-C, the number of images per class is uniformly distributed, therefore, if one or more classes are predicted disproportionately more frequently than others to a certain degree, it indicates the presence of prediction bias. As shown in the Figure~\ref{fig-panda-hist-all}, under corruption settings, certain classes are often predicted at a frequency far exceeding the uniform expectation. In contrast, the application of {\algname} significantly mitigates this effect, leading to a more balanced prediction distribution across all classes.

\begin{figure}[h!]
    \centering
    \includegraphics[width=1\linewidth]{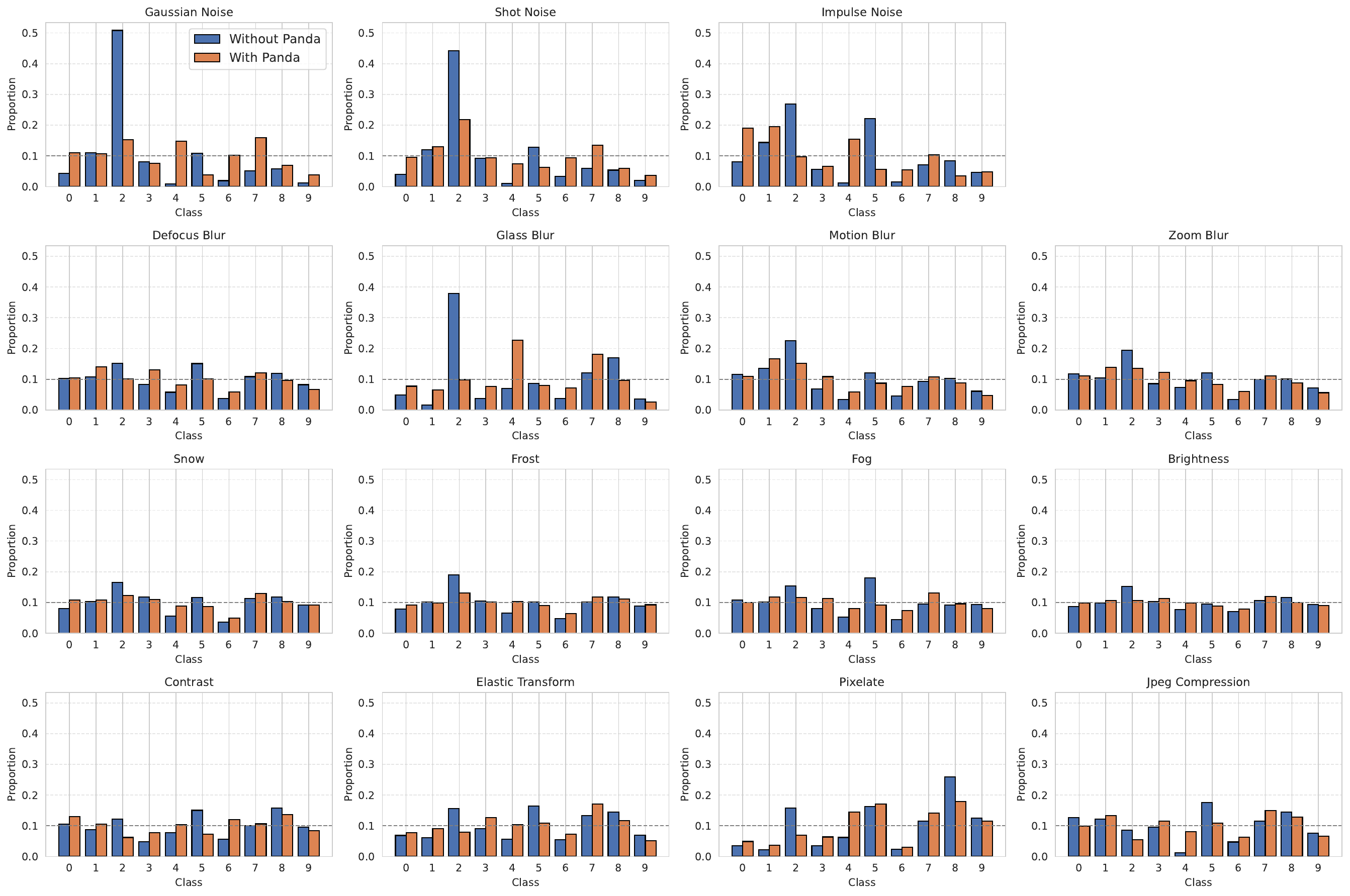}
    \caption{Comparison of predicted class histograms without and with {\algname} across all corruption types in CIFAR-10-C.}
    \label{fig-panda-hist-all}
\end{figure}

\newpage
\section{Proofs}
\label{appendix-c-proof}

In the section, we provide proof of Theorem \ref{thm:offset} below and its generalization to higher dimension in Corollary \ref{crl:offset_hd}.

\thmoffset*

\begin{proof}
    We first reparametrize these components, let
    \begin{align*}
        z_1 = \vcls, \quad z_2 = \frac{1}{s} \cdot \vcorr, \quad z_3 = \frac{1}{s \cdot \sqrt{1 - r^2}} \cdot (n - r \cdot \vcorr), 
    \end{align*}
    with inverse transformation
    \begin{align*}
        \vcls = z_1, \quad \vcorr = s \cdot z_2, \quad n = s\cdot r \cdot z_2 + s \cdot \sqrt{1 - r^2} \cdot z_3. 
    \end{align*}

    \paragraph{W.o offset}
    \begin{align*}
        \Pr(\sign(v) = y) &= \Pr(\sign(\vcls + \vcorr) = \sign(\vcls)) \\
        &= \Pr(\sign(z_1 + s \cdot z_2) = \sign(z_1)) \\
        &= \Pr(z_1 + s \cdot z_2 > 0 \mid z_1 > 0).   \tag{by symmetry} 
    \end{align*}
    Since the joint distribution of \( (z_1, z_2) \) is rotationally symmetric, we convert to polar coordinates:
    \begin{align*}
    z_1 = R \cos\theta, \quad z_2 = R \sin\theta,
    \end{align*}
    where \( R \in [0, \infty) \) and \( \theta \in [-\pi, \pi) \). The inequality conditions become:
    \begin{align*}
        z_1 > 0 &\Rightarrow \cos\theta > 0 \Rightarrow \theta \in \left(-\frac{\pi}{2}, \frac{\pi}{2}\right), \\
        z_1 + s z_2 > 0 &\Rightarrow \cos\theta + s \sin\theta > 0 \Rightarrow \tan\theta > -\frac{1}{s}.
    \end{align*}
    Therefore, the conditional probability is
    \begin{align*}
        \Pr(z_1 + s \cdot z_2 > 0 \mid z_1 > 0) 
        &= \Pr\left(\left. \tan\theta > -\frac{1}{s}\ \right|\  \theta \in \left(-\frac{\pi}{2}, \frac{\pi}{2}\right)\right) \\
        &= \frac{1}{2} \cdot \Pr\left(\left. \tan\theta > -\frac{1}{s}\ \right|\  \theta \in \left(0, \frac{\pi}{2}\right)\right) 
        + \frac{1}{2} \cdot \Pr\left(\left. \tan\theta > -\frac{1}{s}\ \right|\  \theta \in \left(-\frac{\pi}{2}, 0 \right)\right) \\
        &= \frac{1}{2} + \frac{1}{\pi} \cdot \arctan\left(\frac{1}{s}\right). 
    \end{align*}

    \paragraph{W. offset}
    Notice that 
    \begin{align*}
        v - \beta \cdot n &= \vcls + \vcorr - \beta \cdot n \\
        &= z_1 + s \cdot z_2 - \beta \left( s\cdot r \cdot z_2 + s \cdot \sqrt{1 - r^2} \cdot z_3 \right) \\
        &= z_1 + s \cdot (1 - \beta r) \cdot z_2 - s \cdot \beta \cdot \sqrt{1 - r^2} \cdot z_3. 
    \end{align*}
    Let 
    \begin{align*}
        z_4 = \frac{1 - \beta r}{\sqrt{1 - 2\beta r + \beta^2}} \cdot z_2 - \frac{\beta\sqrt{1 - r^2}}{\sqrt{1 - 2\beta r + \beta^2}} \cdot z_3 \sim \gN(0, 1), 
    \end{align*}
    which is independent with $z_1$. We have
    \begin{align*}
        z_1 + s \cdot (1 - \beta r) z_2 - s \cdot \beta \cdot \sqrt{1 - r^2} z_3  = z_1 + s \cdot \sqrt{1 - 2\beta r + \beta^2} \cdot z_4. 
    \end{align*}
    We can treat $s \cdot \sqrt{1 - 2\beta r + \beta^2}$ as a whole. Reusing the results from w.o. offset, we have
    \begin{align*}
        \Pr(\sign(v - \beta \cdot n) = y) &= \frac{1}{2} + \frac{1}{\pi} \arctan \left( \frac{1}{s \cdot \sqrt{1 - 2\beta r + \beta^2}}\right) \\
        &= \frac{1}{2} + \frac{1}{\pi} \arctan \left( \frac{1}{s \cdot \sqrt{1 - r^2 + (\beta - r)^2}}\right). 
    \end{align*}
\end{proof}

\begin{corollary}[Generalization to higher dimension] \label{crl:offset_hd}
    Consider a binary classification problem where the input feature $\vv \in \R^d$ can be decomposed into two independent components $\vv = \vvcls + \vvcorr$, where $\vvcls \sim \gN(\vzero, \mI)$ denotes the class-relevant component and $\vvcorr \sim \gN(\vzero, s^2 \cdot \mI)$ denotes the corruption-related component, with $s > 0$ representing corruption severity. Let the ground-truth label be $y = \sign(\vvcls^\top \vt)$, and the classifier be $\hat{y} = \sign(\vv^\top \vt)$, where $\vt$ is the direction of classifier satisfying $\| \vt \|_2 = 1$. Then the classification accuracy is
    \begin{align}
        \Pr(\sign(\vv) = y) = \frac{1}{2} + \frac{1}{\pi} \cdot \arctan\left(\frac{1}{s}\right).
    \end{align}
    
    Now consider a negatively augmented feature $\vn \sim \gN(\vzero, s^2 \cdot \mI)$ such that the correlation matrix $\rho(\vn, \vvcls) = \vzero$ and $\rho(\vn, \vvcorr) = \mR$. Then offsetting $\vv$ using $\vn$ yields improved accuracy:
    \begin{align}\begin{split}
        &\Pr(\sign(\vv - \beta \cdot \vn) = y) = \frac{1}{2} + \frac{1}{\pi} \cdot \arctan\left(\frac{1}{s \cdot \sqrt{1 - r^2 + (\beta - r)^2}}\right), 
    \end{split}\end{align}
    where $r = \vt^\top \mR \vt$. This is maximized when $\beta = r$. 
\end{corollary}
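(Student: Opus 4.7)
The plan is to handle the two displayed identities separately, reducing the second to the first by a suitable Gaussian change of variables.

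\textbf{Part 1 (no offset).} First I would exploit the symmetry $(\vcls, \vcorr) \stackrel{d}{=} (-\vcls, -\vcorr)$ to collapse the event $\{\sign(\vcls + \vcorr) = \sign(\vcls)\}$ to $\{\vcls + \vcorr > 0 \mid \vcls > 0\}$. After rescaling $\vcorr$ by $1/s$, the pair $(\vcls, \vcorr/s)$ is a standard 2D Gaussian with rotational symmetry, so the cleanest route is to pass to polar coordinates $(R, \theta)$ with $\theta$ uniform on $[-\pi, \pi)$. Each inequality becomes a linear condition on $(\cos\theta, \sin\theta)$: conditioning on $\vcls > 0$ restricts $\theta$ to $(-\pi/2, \pi/2)$, and the target event $\cos\theta + s\sin\theta > 0$ becomes $\tan\theta > -1/s$. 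The probability is then the length of the arc $(-\arctan(1/s), \pi/2)$ divided by $\pi$, which gives $\tfrac{1}{2} + \tfrac{1}{\pi}\arctan(1/s)$.

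\textbf{Part 2 (offset).} My plan is to reduce to Part 1 by rewriting the non-class component of $v - \beta n$ as a single centered Gaussian independent of $\vcls$. Since $n$ has correlation $r$ with $\vcorr$ and zero correlation with $\vcls$, I would decompose $n = r \vcorr + s\sqrt{1 - r^2}\, w$, where $w \sim \gN(0,1)$ is jointly Gaussian with, and hence independent of, $(\vcls, \vcorr)$. Substituting yields $v - \beta n = \vcls + (1 - \beta r)\vcorr - \beta s \sqrt{1-r^2}\, w$. Because $\vcorr$ and $w$ are independent centered Gaussians, the non-$\vcls$ term is a single centered Gaussian whose variance expands to $s^2(1-\beta r)^2 + \beta^2 s^2(1-r^2) = s^2\bigl(1 - r^2 + (\beta - r)^2\bigr)$. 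At that point, $v - \beta n$ has the same joint distribution with $y = \sign(\vcls)$ as in Part 1 but with effective noise scale $\tilde{s} = s\sqrt{1 - r^2 + (\beta - r)^2}$, so the stated formula follows by direct substitution. Optimizing over $\beta$ amounts to minimizing $(\beta - r)^2$, which is achieved uniquely at $\beta = r$, yielding a residual noise level of $s\sqrt{1 - r^2} < s$.

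\textbf{Anticipated obstacle and extension.} I expect the only delicate step to be verifying that the residual $n - r\vcorr$ is genuinely independent of $(\vcls, \vcorr)$ rather than merely uncorrelated; this is where joint Gaussianity is essential, so I would start by asserting that $(\vcls, \vcorr, n)$ is jointly Gaussian, which makes zero correlation equivalent to independence. The polar-coordinate evaluation in Part 1 is otherwise routine. For the higher-dimensional Corollary~\ref{crl:offset_hd}, I would project every vector onto the unit classifier direction $\vt$: the scalars $\vvcls^\top \vt$, $\vvcorr^\top \vt$, and $\vn^\top \vt$ inherit the one-dimensional hypotheses with scalar correlation $r = \vt^\top \mR \vt$, reducing the vector case immediately to the scalar theorem.
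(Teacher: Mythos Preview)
Your proposal is correct and mirrors the paper's proof almost exactly: both arguments use symmetry plus polar coordinates for the baseline accuracy, then orthogonalize $n$ against $\vcorr$ (your $w$ is the paper's $z_3$) to collapse the offset case to Part~1 with effective scale $s\sqrt{1-2\beta r+\beta^2}=s\sqrt{1-r^2+(\beta-r)^2}$, and both handle the high-dimensional corollary by projecting onto $\vt$ and reading off the scalar correlation $r=\vt^\top\mR\vt$. Your explicit flag that joint Gaussianity is needed to upgrade ``uncorrelated'' to ``independent'' is a point the paper uses implicitly but never states.
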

\begin{proof}
    In the high-dimensional case, the only direction that matters for classification is $\vt$. All components orthogonal to $\vt$ do not affect the classification outcome. Therefore, it suffices to reduce the high-dimensional case to an equivalent one-dimensional case along the $\vt$ direction. Let
    \begin{align*}
        \vcls = \vt^\top \vvcls, \quad \vcorr = \vt^\top \vvcorr, \quad n = \vt^\top \vn
    \end{align*}
    where $\vcls$ and $\vcorr$ are still independent, and we have variances
    \begin{align*}
        \sigma^2(\vcls) = \vt^\top \mI \vt = 1, \quad \sigma^2(\vcorr) = \vt^\top (s^2 \cdot \mI) \vt^\top = s^2, \quad \sigma^2(n) = \vt^\top (s^2 \cdot \mI) \vt^\top = s^2
    \end{align*}
    covariances
    \begin{align*}
        \Cov(\vvcorr, \vn) &= (s \cdot \mI) \mR (s \cdot \mI) = s^2 \cdot \mR
    \end{align*}
    and correlations
    \begin{align*}
        \rho(\vcls, n) &= \frac{\Cov(\vcls, n)}{\sigma(\vcls) \sigma(n)} = 0 \\
        \rho(\vcorr, n) &= \frac{\Cov(\vcorr, n)}{\sigma(\vcorr) \sigma(n)} = \frac{\vt^\top \Cov(\vvcorr, \vn) \vt}{s^2} = \frac{\vt^\top (s^2 \cdot \mR) \vt}{s^2}  = \vt^\top \mR \vt
    \end{align*}
    The proof is then completed by applying Theorem \ref{thm:offset} with $r = \vt^\top \mR \vt$. 
\end{proof}
\begin{remark}
    In the context of CLIP, with two classes' text embeddings $\vt_1, \vt_2$ we can consider $\vt = \frac{\vt_1 - \vt_2}{\| \vt_1 - \vt_2 \|_2}$. 
\end{remark}
\newpage
\section{Additional Experiments} 
\label{appendix-d-exp}

\subsection{Hyperparameter details}

To ensure a fair comparison between baselines and their {\algname}-enhanced counterparts, we apply a strictly controlled and consistent hyperparameter tuning process as follows.
(1) For each baseline, we follow its original hyperparameter recommendations and search within the reported ranges under our setting, selecting a configuration that yields relatively strong performance for a given dataset and architecture.
(2) When configuring baseline+{\algname}, we keep all hyperparameters identical to the corresponding baseline, except for the offset ratio $\beta$, thereby ensuring a fair basis for comparison.
(3) For tuning $\beta$, we perform a grid search over the range $(0.1, 1.0)$ with a step size of 0.1, and report the best-performing result for each baseline+{\algname} combination.\\

The following information is our hyperparameter details for ViT-B/32 in CIFAR-10-C.

\begin{itemize}
\item For all positive data augmentation baselines (TPT~\cite{tpt}, TPS~\cite{tps}, Zero~\cite{zero}), we use AugMix to augment each test image 63 times to obtain a batch of 64 images, which includes the original image. We select 10\% of samples in the batch with lowest entropy to aggregate.
\item In TPT~\cite{tpt}, the number of prompt tokens is 4, the prompt is initialized with ``a photo of a'', and class-specific contexts are disabled. We use the AdamW optimizer and adopt a learning rate of $0.001$, consistent with the setting used for ImageNet in the original papers.
\item In TPS~\cite{tps}, we also use the AdamW optimizer and adopt a learning rate of $0.01$, consistent with the setting used for ImageNet in the original papers.
\item In Zero~\cite{zero}, we follow the same AugMix setup and use 10\% of confident predictions for adaptation.
\item In DMN-ZS~\cite{dmn}, the positive cache is enabled with a shot capacity of 50, an adaptation strength ($\alpha$) of 0.3, and a sharpness ratio ($\beta$) of 5.5.
\item In DeYO~\cite{deyo}, we use a learning rate of $0.001$, an entropy margin multiplier of $0.4$, and a margin of $2.0$. We apply patch-based augmentation with patch length 4, occlusion size 32, and start coordinates at (0, 0). The reweighting coefficients for entropy and PLPD are both set to $1.0$, the entropy reset constant is $0.1$, and we perform 1 update step per batch.
\item In SAR~\cite{sar}, we use a learning rate of $0.05$, an entropy margin multiplier of $0.4$, and a reset constant of $0.2$.
\item In ETA~\cite{eata}, we set the learning rate to $0.0001$, the entropy margin multiplier to $0.5$, and the distance margin to $1.0$. The augmentation type is patch-based (patch length 4, occlusion size 32, start position at (0, 0)). Both reweighting terms are set to $1.0$, the entropy reset constant is $0.1$, and 1 step is applied per batch.
\item In Tent~\cite{tent}, we use a learning rate of $0.0002$.
\item In BAT~\cite{bat}, we use a learning rate of $0.0001$.
\item When combining {\algname} with each baseline, all hyperparameters are kept the same as the original baseline, except for the bias offset $\beta$, which is tuned separately. The values of $\beta$ used are: CLIP: 0.7; DeYO: 0.8; SAR: 0.7; ETA: 0.8; Tent: 0.6; TPS: 0.6; TPT: 0.4; Zero: 0.4; DMN: 0.5; BAT: 0.7.
\end{itemize}

The following information is our hyperparameter details for ViT-B/32 in CIFAR-100-C.

\begin{itemize}
    \item For all positive data augmentation baselines (TPT~\cite{tpt}, TPS~\cite{tps}, Zero~\cite{zero}), we use AugMix to augment each test image 63 times to obtain a batch of 64 images, which includes the original image. We select 10\% of samples in the batch with lowest entropy to aggregate. 
    \item In TPT~\cite{tpt}, the number of prompt tokens is 4, the prompt is initialized with ``a photo of a'', and class-specific contexts are disabled. We use the AdamW optimizer and adopt a learning rate of $0.001$, consistent with the setting used for ImageNet in the original papers. 
    \item In TPS~\cite{tps}, we also use the AdamW optimizer and adopt a learning rate of $0.001$, consistent with the setting used for ImageNet in the original papers.
    \item In Zero~\cite{zero}, we follow the same AugMix setup and use 10\% of confident predictions for adaptation.
    \item In DMN-ZS~\cite{dmn}, the positive cache is enabled with a shot capacity of 50, an adaptation strength ($\alpha$) of 0.3, and a sharpness ratio ($\beta$) of 5.5.
    \item In DeYO~\cite{deyo}, we use a learning rate of $0.001$, an entropy margin multiplier of $0.4$, and a margin of $2.0$. We apply patch-based augmentation with patch length 4, occlusion size 32, and start coordinates at (0, 0). The reweighting coefficients for entropy and PLPD are both set to $1.0$, the entropy reset constant is $0.1$, and we perform 1 update step per batch.
    \item In SAR~\cite{sar}, we use a learning rate of $0.1$, an entropy margin multiplier of $0.4$, and a reset constant of $0.2$.
    \item In ETA~\cite{eta}, we set the learning rate to $0.0005$, the entropy margin multiplier to $0.5$, and the distance margin to $1.0$. The augmentation type is patch-based (patch length 4, occlusion size 32, start position at (0, 0)). Both reweighting terms are set to $1.0$, the entropy reset constant is $0.1$, and 1 step is applied per batch.
    \item In Tent~\cite{tent}, we use a learning rate of $0.0001$.
    \item In BAT~\cite{bat}, we use a learning rate of $0.0002$.
    \item When combining {\algname} with each baseline, all hyperparameters are kept the same as the original baseline, except for the bias offset $\beta$, which is tuned separately. The values of $\beta$ used are: CLIP: 0.4; DeYO: 0.5; SAR: 0.3; ETA: 0.4; Tent: 0.3; TPS: 0.4; TPT: 0.4; Zero: 0.5; DMN: 0.5; BAT: 0.3.
\end{itemize}

The following information is our hyperparameter details for ViT-B/16 in ImageNet-C.

\begin{itemize}
    \item For all positive data augmentation baselines (TPT~\cite{tpt}, TPS~\cite{tps}, Zero~\cite{zero}), we use AugMix to augment each test image 63 times to obtain a batch of 64 images, which includes the original image. We select 10\% of samples in the batch with lowest entropy to aggregate. 
    \item In TPT~\cite{tpt}, the number of prompt tokens is 4, the prompt is initialized with ``a photo of a'', and class-specific contexts are disabled. We use the AdamW optimizer and adopt a learning rate of $0.001$, consistent with the setting used for ImageNet in the original papers. 
    \item In TPS~\cite{tps}, we also use the AdamW optimizer and adopt a learning rate of $0.001$, consistent with the setting used for ImageNet in the original papers.
    \item In Zero~\cite{zero}, we follow the same AugMix setup and use 10\% of confident predictions for adaptation.
    \item In DMN-ZS~\cite{dmn}, the positive cache is enabled with a shot capacity of 50, an adaptation strength ($\alpha$) of 0.3, and a sharpness ratio ($\beta$) of 5.5.
    \item In DeYO~\cite{deyo}, we use a learning rate of $0.001$, an entropy margin multiplier of $0.4$, and a margin of $2.0$. We apply patch-based augmentation with patch length 4, occlusion size 32, and start coordinates at (0, 0). The reweighting coefficients for entropy and PLPD are both set to $1.0$, the entropy reset constant is $0.1$, and we perform 1 update step per batch.
    \item In SAR~\cite{sar}, we use a learning rate of $0.01$, an entropy margin multiplier of $0.4$, and a reset constant of $0.2$.
    \item In ETA~\cite{eta}, we set the learning rate to $0.0001$, the entropy margin multiplier to $0.5$, and the distance margin to $1.0$. The augmentation type is patch-based (patch length 4, occlusion size 32, start position at (0, 0)). Both reweighting terms are set to $1.0$, the entropy reset constant is $0.1$, and 1 step is applied per batch.
    \item In Tent~\cite{tent}, we use a learning rate of $0.0001$.
    \item In BAT~\cite{bat}, we use a learning rate of $0.0002$.
    \item When combining {\algname} with each baseline, all hyperparameters are kept the same as the original baseline, except for the bias offset $\beta$, which is tuned separately. The values of $\beta$ used are: CLIP: 0.4; DeYO: 0.3; SAR: 0.2; ETA: 0.3; Tent: 0.3; TPS: 0.4; TPT: 0.4; Zero: 0.4; DMN: 0.3; BAT: 0.3.
\end{itemize}

\subsection{Experiment with details}

In the experiments of \textit{Combination with TTA Methods (RQ1)} and \textit{Comparison of negative augmentation strategies with other NDA methods}, we perform five independent runs for each setting using random seeds \{0, 1, 2, 3, 4\} to ensure statistical reliability for all 15 types of corruption in CIFAR-10-C, CIFAR-100-C, and ImageNet-C. As the space limitation, the main paper just show the final average prediction accuracy for all 15 types of corruption, but not show the specific statistic result for each corruption domain and its following standard deviation. Therefore, in this section, these detailed experiment result is shown in Table~\ref{tab:acc:main_std}. Besides, we also show the specific statistic result for each corruption domain and its following standard deviation for \textit{Comparison of negative augmentation strategies with other NDA methods} as shown in Table~\ref{table-deyo-std}.

\begin{table*}[htbp]
    \centering
    % \vspace{1ex}
    \resizebox{0.95\linewidth}{!}{
    % \tiny
    \setlength{\tabcolsep}{1.0mm}{
        \begin{tabular}{llcccccccccccccccccc}
            \toprule
            & & \multicolumn{16}{c}{ViT-B/32 on CIFAR-10-C} &  & \\
            \cmidrule(lr){3-18} \cmidrule(lr){18-20}
            Method & Venue & \multicolumn{3}{c}{Noise} & \multicolumn{4}{c}{Blur} & \multicolumn{4}{c}{Weather} & \multicolumn{4}{c}{Digital} & \multirow{2.5}{*}{Avg.} & \multirow{2.5}{*}{Incr.} \\
            \cmidrule(lr){3-5} \cmidrule(lr){6-9} \cmidrule(lr){10-13} \cmidrule(lr){14-17}
            & & Gauss. & Shot & Impul. 
            & Defoc. & Glass & Motion & Zoom  
            & Snow & Frost & Fog & Brit. 
            & Contr. & Elastic & Pixel & JPEG \\
            \midrule
            CLIP & ICML'21 & \meansd{35.5}{0.0} & \meansd{39.9}{0.0} & \meansd{43.1}{0.0} & \meansd{69.9}{0.0} & \meansd{41.5}{0.0} & \meansd{64.5}{0.0} & \meansd{70.1}{0.0} & \meansd{70.9}{0.0} & \meansd{72.3}{0.0} & \meansd{66.6}{0.0} & \meansd{81.3}{0.0} & \meansd{64.5}{0.0} & \meansd{59.6}{0.0} & \meansd{48.1}{0.0} & \meansd{56.7}{0.0} & \meansd{59.0}{0.0} \\
            +{\algname} & - & \meansd{43.3}{0.1} & \meansd{46.7}{0.1} & \meansd{43.9}{0.2} & \meansd{71.3}{0.1} & \meansd{45.3}{0.2} & \meansd{66.5}{0.1} & \meansd{72.2}{0.1} & \meansd{72.9}{0.1} & \meansd{74.7}{0.1} & \meansd{68.4}{0.2} & \meansd{82.8}{0.1} & \meansd{66.2}{0.1} & \meansd{61.0}{0.1} & \meansd{51.1}{0.1} & \meansd{58.4}{0.1} & \meansd{61.6}{0.0} & +2.6 \\
            \midrule
            Tent & ICLR'21 & \meansd{27.5}{0.9} & \meansd{34.8}{0.6} & \meansd{47.8}{0.5} & \meansd{75.0}{0.3} & \meansd{36.8}{2.4} & \meansd{64.7}{1.1} & \meansd{73.4}{0.8} & \meansd{76.4}{0.3} & \meansd{78.1}{0.3} & \meansd{74.7}{0.2} & \meansd{87.2}{0.2} & \meansd{76.8}{0.3} & \meansd{67.5}{0.1} & \meansd{59.5}{0.3} & \meansd{62.2}{0.1} & \meansd{62.8}{0.1} \\
            +{\algname} & - & \meansd{54.6}{0.3} & \meansd{58.9}{0.3} & \meansd{53.4}{0.6} & \meansd{77.9}{0.1} & \meansd{60.7}{0.5} & \meansd{75.6}{0.1} & \meansd{78.5}{0.2} & \meansd{79.1}{0.2} & \meansd{80.4}{0.2} & \meansd{78.2}{0.1} & \meansd{87.7}{0.1} & \meansd{78.0}{0.1} & \meansd{70.9}{0.3} & \meansd{67.2}{0.3} & \meansd{64.8}{0.2} & \meansd{71.1}{0.1} & +8.3\\
            \midrule
            ETA & CVPR'22 & \meansd{43.4}{0.3} & \meansd{48.3}{0.6} & \meansd{48.1}{0.2} & \meansd{74.4}{0.2} & \meansd{52.1}{0.2} & \meansd{70.6}{0.3} & \meansd{74.8}{0.1} & \meansd{75.0}{0.1} & \meansd{76.7}{0.3} & \meansd{72.7}{0.2} & \meansd{85.6}{0.1} & \meansd{72.5}{0.3} & \meansd{65.0}{0.2} & \meansd{53.6}{0.3} & \meansd{60.9}{0.2} & \meansd{64.9}{0.1} \\
            +{\algname} & - & \meansd{53.9}{0.2} & \meansd{56.8}{0.2} & \meansd{48.5}{0.2} & \meansd{76.7}{0.2} & \meansd{55.3}{0.2} & \meansd{73.5}{0.2} & \meansd{77.1}{0.2} & \meansd{77.4}{0.2} & \meansd{78.3}{0.1} & \meansd{75.5}{0.2} & \meansd{86.2}{0.2} & \meansd{73.9}{0.3} & \meansd{68.0}{0.1} & \meansd{61.6}{0.2} & \meansd{61.6}{0.2} & \meansd{68.3}{0.1} & +3.4 \\
            \midrule
            SAR & ICLR'23 & \meansd{34.6}{0.7} & \meansd{34.9}{0.5} & \meansd{38.4}{0.7} & \meansd{76.6}{1.0} & \meansd{49.4}{1.3} & \meansd{72.6}{1.3} & \meansd{74.8}{0.6} & \meansd{77.5}{0.7} & \meansd{79.9}{0.2} & \meansd{76.9}{0.4} & \meansd{86.7}{0.1} & \meansd{80.6}{0.4} & \meansd{59.3}{0.7} & \meansd{52.0}{0.9} & \meansd{55.0}{0.7} & \meansd{63.3}{0.3} \\
            +{\algname} & - & \meansd{52.8}{0.7} & \meansd{53.6}{0.9} & \meansd{44.9}{0.5} & \meansd{77.9}{0.4} & \meansd{61.0}{1.7} & \meansd{77.5}{1.0} & \meansd{78.7}{0.3} & \meansd{80.6}{0.3} & \meansd{81.2}{0.3} & \meansd{79.9}{0.3} & \meansd{87.4}{0.2} & \meansd{79.9}{0.2} & \meansd{68.1}{2.0} & \meansd{70.8}{0.4} & \meansd{66.3}{0.2} & \meansd{70.7}{0.2} & +7.4 \\
            \midrule
            DeYO & ICLR'24 & \meansd{47.8}{0.1} & \meansd{51.7}{0.2} & \meansd{48.6}{0.1} & \meansd{74.2}{0.3} & \meansd{52.3}{0.2} & \meansd{70.2}{0.1} & \meansd{74.6}{0.2} & \meansd{75.2}{0.1} & \meansd{76.8}{0.1} & \meansd{72.8}{0.0} & \meansd{85.7}{0.2} & \meansd{72.5}{0.0} & \meansd{64.8}{0.1} & \meansd{54.8}{0.1} & \meansd{60.7}{0.2} & \meansd{65.5}{0.0} \\
            +{\algname} & - & \meansd{51.8}{0.1} & \meansd{55.0}{0.2} & \meansd{48.6}{0.1} & \meansd{75.3}{0.0} & \meansd{54.5}{0.2} & \meansd{72.7}{0.3} & \meansd{76.1}{0.2} & \meansd{75.6}{0.1} & \meansd{76.9}{0.2} & \meansd{73.5}{0.2} & \meansd{85.8}{0.2} & \meansd{73.1}{0.3} & \meansd{67.2}{0.2} & \meansd{59.0}{0.1} & \meansd{62.3}{0.2} & \meansd{67.2}{0.1} & +1.7\\
            \midrule
            TPT & NeurIPS'22 & \meansd{42.7}{0.3} & \meansd{45.4}{0.2} & \meansd{46.7}{0.1} & \meansd{70.8}{0.1} & \meansd{46.1}{0.3} & \meansd{67.3}{0.0} & \meansd{72.1}{0.2} & \meansd{73.5}{0.2} & \meansd{75.8}{0.1} & \meansd{68.2}{0.2} & \meansd{83.9}{0.2} & \meansd{72.0}{0.3} & \meansd{62.3}{0.1} & \meansd{49.8}{0.2} & \meansd{57.0}{0.1} & \meansd{62.2}{0.1} \\
            +{\algname} & - & \meansd{45.9}{0.2} & \meansd{48.9}{0.1} & \meansd{47.9}{0.1} & \meansd{72.7}{0.0} & \meansd{46.6}{0.1} & \meansd{68.7}{0.2} & \meansd{73.8}{0.0} & \meansd{74.8}{0.1} & \meansd{75.8}{0.2} & \meansd{70.0}{0.1} & \meansd{83.9}{0.1} & \meansd{69.6}{0.0} & \meansd{63.5}{0.1} & \meansd{51.9}{0.1} & \meansd{59.1}{0.2} & \meansd{63.5}{0.0} & +1.3 \\
            \midrule
            DMN-ZS & CVPR'24 & \meansd{40.6}{0.5} & \meansd{44.0}{0.3} & \meansd{45.5}{0.2} & \meansd{71.5}{0.1} & \meansd{45.1}{0.2} & \meansd{67.7}{0.4} & \meansd{72.6}{0.2} & \meansd{72.5}{0.3} & \meansd{73.4}{0.2} & \meansd{67.3}{0.1} & \meansd{82.4}{0.3} & \meansd{63.4}{0.2} & \meansd{62.3}{0.1} & \meansd{55.1}{0.3} & \meansd{60.6}{0.1} & \meansd{61.6}{0.0} \\
            +{\algname} & - & \meansd{43.6}{0.4} & \meansd{47.4}{0.3} & \meansd{47.6}{0.3} & \meansd{72.6}{0.1} & \meansd{47.7}{0.3} & \meansd{68.3}{0.3} & \meansd{73.6}{0.2} & \meansd{74.0}{0.1} & \meansd{75.0}{0.2} & \meansd{68.5}{0.4} & \meansd{83.4}{0.1} & \meansd{65.1}{0.4} & \meansd{62.9}{0.2} & \meansd{55.7}{0.1} & \meansd{60.5}{0.2} & \meansd{63.1}{0.1} & +1.5\\
            \midrule
            Zero & NeurIPS'24 & \meansd{45.8}{0.2} & \meansd{49.5}{0.1} & \meansd{48.6}{0.1} & \meansd{67.7}{0.1} & \meansd{48.0}{0.1} & \meansd{68.8}{0.1} & \meansd{71.1}{0.0} & \meansd{71.7}{0.1} & \meansd{74.1}{0.2} & \meansd{68.4}{0.1} & \meansd{82.2}{0.1} & \meansd{80.0}{0.1} & \meansd{63.5}{0.1} & \meansd{51.5}{0.2} & \meansd{57.4}{0.1} & \meansd{63.2}{0.0} \\
            +{\algname} & - & \meansd{50.3}{0.1} & \meansd{53.5}{0.1} & \meansd{49.9}{0.1} & \meansd{69.8}{0.0} & \meansd{49.2}{0.2} & \meansd{70.0}{0.1} & \meansd{73.0}{0.1} & \meansd{73.3}{0.2} & \meansd{73.6}{0.1} & \meansd{70.4}{0.1} & \meansd{82.6}{0.1} & \meansd{81.4}{0.2} & \meansd{64.8}{0.2} & \meansd{53.4}{0.1} & \meansd{58.6}{0.1} & \meansd{64.9}{0.0} & +1.7\\
            \midrule
            TPS & WACV'25 & \meansd{42.8}{0.1} & \meansd{47.7}{0.2} & \meansd{48.6}{0.1} & \meansd{69.8}{0.1} & \meansd{48.7}{0.0} & \meansd{69.5}{0.2} & \meansd{72.8}{0.1} & \meansd{73.4}{0.1} & \meansd{75.6}{0.0} & \meansd{69.6}{0.2} & \meansd{83.0}{0.1} & \meansd{77.6}{0.0} & \meansd{64.2}{0.1} & \meansd{52.5}{0.1} & \meansd{59.4}{0.0} & \meansd{63.7}{0.0} \\
            +{\algname} & - & \meansd{48.9}{0.1} & \meansd{51.9}{0.1} & \meansd{49.3}{0.2} & \meansd{72.2}{0.0} & \meansd{50.8}{0.1} & \meansd{71.1}{0.1} & \meansd{74.7}{0.0} & \meansd{75.0}{0.2} & \meansd{75.1}{0.0} & \meansd{71.5}{0.0} & \meansd{83.6}{0.1} & \meansd{78.9}{0.1} & \meansd{66.0}{0.2} & \meansd{54.8}{0.1} & \meansd{59.5}{0.1} & \meansd{65.6}{0.0} & +1.9 \\
            \midrule
            BAT & ICCV'25 & \meansd{49.9}{0.4} & \meansd{53.5}{0.3} & \meansd{49.1}{0.1} & \meansd{74.5}{0.2} & \meansd{53.1}{0.2} & \meansd{72.3}{0.2} & \meansd{75.2}{0.2} & \meansd{74.5}{0.1} & \meansd{76.2}{0.2} & \meansd{71.5}{0.2} & \meansd{84.9}{0.1} & \meansd{72.0}{0.2} & \meansd{65.2}{0.2} & \meansd{53.5}{0.2} & \meansd{60.7}{0.2} & \meansd{65.7}{0.0} \\
            +{\algname} & - & \meansd{54.6}{0.3} & \meansd{57.6}{0.2} & \meansd{51.5}{0.2} & \meansd{76.7}{0.0} & \meansd{57.1}{0.1} & \meansd{74.0}{0.2} & \meansd{77.4}{0.2} & \meansd{77.1}{0.2} & \meansd{77.9}{0.2} & \meansd{74.5}{0.3} & \meansd{85.9}{0.2} & \meansd{73.6}{0.3} & \meansd{68.1}{0.2} & \meansd{58.6}{0.2} & \meansd{62.3}{0.2} & \meansd{68.5}{0.0} & +2.8 \\
            \midrule 
            %%%%%%%%%%%%%%%%%%%%%%%%%%%
            %%%%%%%%%%%%%%%%%%%%%%%%%%%
            %%%%%%%%%%%%%%%%%%%%%%%%%%%
            %%%%%%%%%%%%%%%%%%%%%%%%%%%
            %%%%%%%%%%%%%%%%%%%%%%%%%%%
            %%%%%%%%%%%%%%%%%%%%%%%%%%%
            %%%%%%%%%%%%%%%%%%%%%%%%%%%
            %%%%%%%%%%%%%%%%%%%%%%%%%%%
            & & \multicolumn{16}{c}{ViT-B/32 on CIFAR-100-C} &  & \\
            \cmidrule(lr){3-18} \cmidrule(lr){18-20}
            Method & Venue & \multicolumn{3}{c}{Noise} & \multicolumn{4}{c}{Blur} & \multicolumn{4}{c}{Weather} & \multicolumn{4}{c}{Digital} & \multirow{2.5}{*}{Avg.} & \multirow{2.5}{*}{Incr.} \\
            \cmidrule(lr){3-5} \cmidrule(lr){6-9} \cmidrule(lr){10-13} \cmidrule(lr){14-17}
            & & Gauss. & Shot & Impul. 
            & Defoc. & Glass & Motion & Zoom  
            & Snow & Frost & Fog & Brit. 
            & Contr. & Elastic & Pixel & JPEG \\
            \midrule
            CLIP & ICML'21 & \meansd{16.2}{0.0} & \meansd{17.9}{0.0} & \meansd{17.6}{0.0} & \meansd{39.0}{0.0} & \meansd{17.7}{0.0} & \meansd{38.6}{0.0} & \meansd{43.8}{0.0} & \meansd{42.2}{0.0} & \meansd{43.4}{0.0} & \meansd{39.6}{0.0} & \meansd{50.3}{0.0} & \meansd{29.3}{0.0} & \meansd{28.8}{0.0} & \meansd{22.9}{0.0} & \meansd{29.4}{0.0} & \meansd{31.8}{0.0} \\
            +{\algname} & - & \meansd{18.4}{0.1} & \meansd{19.9}{0.0} & \meansd{19.9}{0.1} & \meansd{40.7}{0.1} & \meansd{19.7}{0.1} & \meansd{40.1}{0.0} & \meansd{44.7}{0.1} & \meansd{43.5}{0.1} & \meansd{44.8}{0.1} & \meansd{40.3}{0.1} & \meansd{51.8}{0.1} & \meansd{31.8}{0.0} & \meansd{29.7}{0.1} & \meansd{24.6}{0.1} & \meansd{31.2}{0.1} & \meansd{33.4}{0.0} & +1.6 \\
            \midrule
            Tent & ICLR'21 & \meansd{15.8}{0.4} & \meansd{18.7}{0.4} & \meansd{17.3}{0.1} & \meansd{46.3}{0.2} & \meansd{11.2}{0.3} & \meansd{44.8}{0.1} & \meansd{50.1}{0.1} & \meansd{47.8}{0.1} & \meansd{46.9}{0.2} & \meansd{45.0}{0.3} & \meansd{58.2}{0.1} & \meansd{39.9}{0.3} & \meansd{32.4}{0.2} & \meansd{27.1}{0.4} & \meansd{34.1}{0.1} & \meansd{35.7}{0.1} \\
            +{\algname} & - & \meansd{22.0}{0.2} & \meansd{24.9}{0.2} & \meansd{23.3}{0.2} & \meansd{47.1}{0.2} & \meansd{20.9}{0.5} & \meansd{45.5}{0.1} & \meansd{50.4}{0.2} & \meansd{48.3}{0.2} & \meansd{48.2}{0.1} & \meansd{46.0}{0.2} & \meansd{59.0}{0.1} & \meansd{41.4}{0.1} & \meansd{33.8}{0.2} & \meansd{30.5}{0.3} & \meansd{35.1}{0.2} & \meansd{38.4}{0.1} & +2.7 \\
            \midrule
            ETA & CVPR'22 & \meansd{26.7}{1.0} & \meansd{27.8}{1.4} & \meansd{23.6}{1.1} & \meansd{50.2}{0.1} & \meansd{7.8}{1.6} & \meansd{49.4}{0.2} & \meansd{52.8}{0.5} & \meansd{52.1}{0.2} & \meansd{52.2}{0.6} & \meansd{50.9}{0.3} & \meansd{63.2}{0.2} & \meansd{50.6}{0.3} & \meansd{32.4}{4.5} & \meansd{35.4}{0.7} & \meansd{37.3}{0.4} & \meansd{40.8}{0.4} \\
            +{\algname} & - & \meansd{29.3}{0.3} & \meansd{31.4}{0.5} & \meansd{27.3}{0.2} & \meansd{50.4}{0.2} & \meansd{26.3}{0.7} & \meansd{49.6}{0.3} & \meansd{52.8}{0.3} & \meansd{51.5}{0.3} & \meansd{51.9}{0.1} & \meansd{51.7}{0.3} & \meansd{63.3}{0.2} & \meansd{50.8}{0.4} & \meansd{36.7}{0.3} & \meansd{37.9}{0.2} & \meansd{37.7}{0.3} & \meansd{43.3}{0.1} & +2.5 \\
            \midrule
            SAR & ICLR'23 & \meansd{22.7}{0.9} & \meansd{24.2}{0.6} & \meansd{17.2}{0.8} & \meansd{49.1}{0.4} & \meansd{11.6}{1.1} & \meansd{45.8}{1.4} & \meansd{51.9}{0.4} & \meansd{51.7}{0.3} & \meansd{52.8}{0.5} & \meansd{51.2}{0.9} & \meansd{62.7}{0.2} & \meansd{52.2}{0.3} & \meansd{30.2}{1.4} & \meansd{31.3}{0.8} & \meansd{32.4}{1.6} & \meansd{39.1}{0.3} \\
            +{\algname} & - & \meansd{26.9}{1.1} & \meansd{24.2}{0.4} & \meansd{25.4}{0.3} & \meansd{49.9}{0.3} & \meansd{18.9}{0.5} & \meansd{48.9}{0.6} & \meansd{52.5}{0.4} & \meansd{51.7}{0.6} & \meansd{52.2}{0.7} & \meansd{52.1}{0.5} & \meansd{63.0}{0.2} & \meansd{53.1}{0.2} & \meansd{36.0}{1.0} & \meansd{35.7}{1.2} & \meansd{35.6}{0.3} & \meansd{41.7}{0.2} & +2.6 \\
            \midrule
            DeYO & ICLR'24 & \meansd{18.0}{1.6} & \meansd{21.7}{1.6} & \meansd{15.3}{1.5} & \meansd{50.4}{0.4} & \meansd{9.4}{1.2} & \meansd{49.9}{0.4} & \meansd{50.7}{1.5} & \meansd{50.9}{0.5} & \meansd{53.1}{0.6} & \meansd{52.2}{0.5} & \meansd{64.2}{0.1} & \meansd{53.7}{0.6} & \meansd{20.3}{1.3} & \meansd{29.2}{1.6} & \meansd{31.0}{1.8} & \meansd{38.0}{0.3} \\
            +{\algname} & - & \meansd{27.3}{1.4} & \meansd{28.2}{1.5} & \meansd{23.9}{0.6} & \meansd{50.8}{0.2} & \meansd{21.5}{1.6} & \meansd{49.9}{0.3} & \meansd{51.9}{0.8} & \meansd{51.5}{0.9} & \meansd{53.3}{0.3} & \meansd{52.3}{0.3} & \meansd{63.7}{0.1} & \meansd{53.3}{0.3} & \meansd{34.1}{1.5} & \meansd{36.2}{1.7} & \meansd{34.3}{0.9} & \meansd{42.1}{0.2} & +4.1 \\
            \midrule
            TPT & NeurIPS'22 & \meansd{15.7}{0.2} & \meansd{17.4}{0.1} & \meansd{17.1}{0.1} & \meansd{39.0}{0.1} & \meansd{18.5}{0.1} & \meansd{38.6}{0.1} & \meansd{44.0}{0.0} & \meansd{43.3}{0.0} & \meansd{44.0}{0.1} & \meansd{40.2}{0.1} & \meansd{50.9}{0.2} & \meansd{31.1}{0.0} & \meansd{30.0}{0.1} & \meansd{22.3}{0.1} & \meansd{29.4}{0.1} & \meansd{32.1}{0.0} \\
            +{\algname} & - & \meansd{18.6}{0.1} & \meansd{20.1}{0.1} & \meansd{20.5}{0.1} & \meansd{41.5}{0.0} & \meansd{20.5}{0.2} & \meansd{40.8}{0.1} & \meansd{45.8}{0.0} & \meansd{44.8}{0.2} & \meansd{46.2}{0.1} & \meansd{41.3}{0.1} & \meansd{52.9}{0.1} & \meansd{34.1}{0.1} & \meansd{30.9}{0.0} & \meansd{25.4}{0.1} & \meansd{31.6}{0.1} & \meansd{34.3}{0.0} & +2.2 \\
            \midrule
            DMN-ZS & CVPR'24 & \meansd{16.5}{0.4} & \meansd{18.2}{0.3} & \meansd{17.9}{0.3} & \meansd{39.5}{0.1} & \meansd{18.0}{0.3} & \meansd{39.0}{0.4} & \meansd{44.4}{0.3} & \meansd{42.7}{0.2} & \meansd{43.8}{0.1} & \meansd{39.9}{0.2} & \meansd{50.7}{0.3} & \meansd{29.9}{0.2} & \meansd{29.1}{0.1} & \meansd{23.3}{0.3} & \meansd{29.9}{0.1} & \meansd{32.2}{0.1} \\
            +{\algname} & - & \meansd{19.0}{0.3} & \meansd{20.2}{0.2} & \meansd{20.6}{0.4} & \meansd{41.5}{0.2} & \meansd{20.6}{0.3} & \meansd{41.2}{0.2} & \meansd{45.8}{0.2} & \meansd{44.5}{0.1} & \meansd{45.7}{0.2} & \meansd{40.7}{0.2} & \meansd{52.6}{0.2} & \meansd{32.8}{0.1} & \meansd{30.7}{0.1} & \meansd{25.8}{0.2} & \meansd{32.1}{0.2} & \meansd{34.2}{0.1} & +2.0 \\
            \midrule
            Zero & NeurIPS'24 & \meansd{9.6}{0.2} & \meansd{10.5}{0.1} & \meansd{12.0}{0.0} & \meansd{33.8}{0.2} & \meansd{18.4}{0.3} & \meansd{34.5}{0.2} & \meansd{38.4}{0.1} & \meansd{41.3}{0.3} & \meansd{42.5}{0.1} & \meansd{37.3}{0.1} & \meansd{47.8}{0.2} & \meansd{44.0}{0.1} & \meansd{30.3}{0.1} & \meansd{17.9}{0.2} & \meansd{24.7}{0.1} & \meansd{29.5}{0.1} \\
            +{\algname} & - & \meansd{15.1}{0.1} & \meansd{16.8}{0.0} & \meansd{18.2}{0.1} & \meansd{38.2}{0.1} & \meansd{20.6}{0.1} & \meansd{37.8}{0.1} & \meansd{42.4}{0.1} & \meansd{42.0}{0.1} & \meansd{43.4}{0.1} & \meansd{40.4}{0.1} & \meansd{50.4}{0.1} & \meansd{46.7}{0.2} & \meansd{32.2}{0.0} & \meansd{25.3}{0.1} & \meansd{29.4}{0.1} & \meansd{33.3}{0.0} & +3.8 \\
            \midrule
            TPS & WACV'25 & \meansd{16.2}{0.1} & \meansd{17.6}{0.1} & \meansd{17.9}{0.1} & \meansd{39.4}{0.1} & \meansd{19.2}{0.3} & \meansd{39.5}{0.2} & \meansd{45.0}{0.1} & \meansd{44.5}{0.0} & \meansd{45.9}{0.1} & \meansd{41.1}{0.1} & \meansd{51.6}{0.0} & \meansd{34.6}{0.2} & \meansd{31.1}{0.1} & \meansd{22.9}{0.1} & \meansd{30.0}{0.2} & \meansd{33.1}{0.1} \\
            +{\algname} & - & \meansd{19.4}{0.1} & \meansd{21.2}{0.1} & \meansd{21.5}{0.0} & \meansd{42.0}{0.1} & \meansd{21.4}{0.2} & \meansd{41.5}{0.1} & \meansd{46.2}{0.1} & \meansd{45.6}{0.0} & \meansd{46.9}{0.0} & \meansd{41.8}{0.1} & \meansd{53.6}{0.0} & \meansd{36.1}{0.1} & \meansd{31.8}{0.1} & \meansd{25.7}{0.0} & \meansd{32.1}{0.2} & \meansd{35.1}{0.0} & +2.0 \\
            \midrule
            BAT & ICCV'25 & \meansd{16.0}{0.2} & \meansd{21.6}{0.3} & \meansd{21.4}{0.1} & \meansd{46.5}{0.1} & \meansd{21.5}{0.3} & \meansd{44.5}{0.3} & \meansd{50.4}{0.3} & \meansd{47.3}{0.2} & \meansd{46.5}{0.3} & \meansd{43.9}{0.2} & \meansd{58.4}{0.2} & \meansd{36.5}{0.5} & \meansd{34.1}{0.1} & \meansd{25.5}{0.2} & \meansd{32.9}{0.2} & \meansd{36.5}{0.0} \\
            +{\algname} & - & \meansd{21.2}{0.1} & \meansd{24.9}{0.1} & \meansd{23.6}{0.2} & \meansd{46.9}{0.3} & \meansd{23.2}{0.3} & \meansd{44.7}{0.2} & \meansd{50.4}{0.1} & \meansd{47.4}{0.2} & \meansd{46.8}{0.2} & \meansd{44.2}{0.3} & \meansd{59.0}{0.4} & \meansd{37.7}{0.2} & \meansd{34.0}{0.2} & \meansd{28.3}{0.2} & \meansd{33.7}{0.2} & \meansd{37.7}{0.0} & +1.2 \\
            \midrule
            %%%%%%%%%%%%%%%%%%%%%%%%%%%
            %%%%%%%%%%%%%%%%%%%%%%%%%%%
            %%%%%%%%%%%%%%%%%%%%%%%%%%%
            %%%%%%%%%%%%%%%%%%%%%%%%%%%
            %%%%%%%%%%%%%%%%%%%%%%%%%%%
            %%%%%%%%%%%%%%%%%%%%%%%%%%%
            %%%%%%%%%%%%%%%%%%%%%%%%%%%
            %%%%%%%%%%%%%%%%%%%%%%%%%%%
            & & \multicolumn{16}{c}{ViT-B/16 on ImageNet-C} &  & \\
            \cmidrule(lr){3-18} \cmidrule(lr){18-20}
            Method & Venue & \multicolumn{3}{c}{Noise} & \multicolumn{4}{c}{Blur} & \multicolumn{4}{c}{Weather} & \multicolumn{4}{c}{Digital} & \multirow{2.5}{*}{Avg.} & \multirow{2.5}{*}{Incr.} \\
            \cmidrule(lr){3-5} \cmidrule(lr){6-9} \cmidrule(lr){10-13} \cmidrule(lr){14-17}
            & & Gauss. & Shot & Impul. 
            & Defoc. & Glass & Motion & Zoom  
            & Snow & Frost & Fog & Brit. 
            & Contr. & Elastic & Pixel & JPEG \\
            \midrule
            CLIP & ICML'21 & \meansd{11.2}{0.0} & \meansd{12.6}{0.0} & \meansd{12.0}{0.0} & \meansd{23.3}{0.0} & \meansd{15.2}{0.0} & \meansd{24.4}{0.0} & \meansd{22.6}{0.0} & \meansd{32.4}{0.0} & \meansd{29.8}{0.0} & \meansd{35.9}{0.0} & \meansd{54.0}{0.0} & \meansd{17.3}{0.0} & \meansd{12.7}{0.0} & \meansd{31.0}{0.0} & \meansd{33.4}{0.0} & \meansd{24.5}{0.0} \\
            +{\algname} & - & \meansd{14.6}{0.1} & \meansd{16.7}{0.1} & \meansd{15.6}{0.1} & \meansd{24.1}{0.1} & \meansd{16.0}{0.1} & \meansd{26.2}{0.1} & \meansd{24.3}{0.1} & \meansd{32.9}{0.1} & \meansd{31.1}{0.1} & \meansd{37.4}{0.2} & \meansd{53.3}{0.1} & \meansd{18.9}{0.1} & \meansd{14.8}{0.1} & \meansd{32.9}{0.1} & \meansd{33.6}{0.1} & \meansd{26.2}{0.0} & +1.7 \\
            \midrule
            Tent & ICLR'21 & \meansd{7.1}{0.1} & \meansd{8.3}{0.1} & \meansd{8.4}{0.1} & \meansd{25.4}{0.1} & \meansd{17.9}{0.2} & \meansd{27.3}{0.1} & \meansd{24.9}{0.1} & \meansd{33.2}{0.1} & \meansd{30.2}{0.1} & \meansd{37.7}{0.2} & \meansd{54.6}{0.0} & \meansd{21.4}{0.2} & \meansd{13.9}{0.1} & \meansd{33.9}{0.2} & \meansd{36.0}{0.2} & \meansd{25.3}{0.0} \\
            +{\algname} & - & \meansd{16.9}{0.2} & \meansd{18.8}{0.2} & \meansd{17.3}{0.1} & \meansd{26.1}{0.1} & \meansd{18.8}{0.2} & \meansd{28.5}{0.2} & \meansd{26.3}{0.3} & \meansd{34.4}{0.2} & \meansd{31.5}{0.2} & \meansd{39.1}{0.2} & \meansd{54.4}{0.2} & \meansd{22.4}{0.1} & \meansd{16.3}{0.2} & \meansd{35.2}{0.2} & \meansd{36.6}{0.2} & \meansd{28.2}{0.1} & +2.9 \\
            \midrule
            ETA & CVPR'22 & \meansd{13.6}{0.3} & \meansd{15.4}{0.2} & \meansd{14.0}{0.2} & \meansd{24.9}{0.1} & \meansd{17.7}{0.2} & \meansd{26.7}{0.1} & \meansd{24.5}{0.1} & \meansd{33.3}{0.2} & \meansd{30.8}{0.2} & \meansd{37.5}{0.1} & \meansd{54.7}{0.1} & \meansd{20.6}{0.2} & \meansd{14.6}{0.0} & \meansd{33.6}{0.1} & \meansd{35.7}{0.0} & \meansd{26.5}{0.0} \\
            +{\algname} & - & \meansd{16.4}{0.4} & \meansd{18.3}{0.2} & \meansd{16.8}{0.1} & \meansd{25.7}{0.1} & \meansd{18.3}{0.1} & \meansd{27.9}{0.2} & \meansd{26.1}{0.2} & \meansd{34.1}{0.1} & \meansd{31.7}{0.0} & \meansd{39.0}{0.2} & \meansd{54.4}{0.2} & \meansd{21.7}{0.2} & \meansd{16.3}{0.2} & \meansd{35.0}{0.1} & \meansd{36.1}{0.2} & \meansd{27.9}{0.0} & +1.4 \\
            \midrule
            SAR & ICLR'23 & \meansd{20.2}{0.3} & \meansd{21.7}{0.4} & \meansd{21.0}{0.3} & \meansd{27.3}{0.2} & \meansd{24.8}{0.3} & \meansd{31.9}{0.6} & \meansd{29.2}{0.4} & \meansd{36.9}{0.2} & \meansd{33.1}{0.4} & \meansd{41.9}{0.3} & \meansd{56.2}{0.1} & \meansd{30.5}{0.6} & \meansd{23.0}{0.1} & \meansd{39.5}{0.4} & \meansd{40.0}{0.3} & \meansd{31.8}{0.2} \\
            +{\algname} & - & \meansd{20.9}{0.6} & \meansd{22.8}{0.4} & \meansd{21.9}{0.4} & \meansd{27.3}{0.5} & \meansd{24.8}{0.3} & \meansd{32.5}{0.3} & \meansd{29.8}{0.3} & \meansd{37.2}{0.4} & \meansd{33.8}{0.3} & \meansd{41.9}{0.4} & \meansd{56.0}{0.2} & \meansd{31.1}{0.6} & \meansd{24.7}{0.9} & \meansd{39.7}{0.3} & \meansd{40.4}{0.2} & \meansd{32.4}{0.2} & +0.6 \\
            \midrule
            DeYO & ICLR'24 & \meansd{17.0}{0.3} & \meansd{17.6}{0.4} & \meansd{16.6}{0.1} & \meansd{25.7}{0.1} & \meansd{22.0}{0.2} & \meansd{30.1}{0.2} & \meansd{24.9}{0.1} & \meansd{36.7}{0.1} & \meansd{32.6}{0.1} & \meansd{43.2}{0.2} & \meansd{57.7}{0.3} & \meansd{28.6}{0.2} & \meansd{2.4}{0.3} & \meansd{40.1}{0.2} & \meansd{39.7}{0.3} & \meansd{29.0}{0.2} \\
            +{\algname} & - & \meansd{16.6}{0.5} & \meansd{20.5}{0.3} & \meansd{20.1}{0.1} & \meansd{23.9}{0.1} & \meansd{23.0}{0.3} & \meansd{32.1}{0.2} & \meansd{30.3}{0.0} & \meansd{37.8}{0.2} & \meansd{32.8}{0.1} & \meansd{43.6}{0.2} & \meansd{58.1}{0.3} & \meansd{24.3}{0.1} & \meansd{24.4}{0.2} & \meansd{40.4}{0.3} & \meansd{40.5}{0.3} & \meansd{31.2}{0.2} & +2.2 \\
            \midrule
            TPT & NeurIPS'22 & \meansd{9.9}{0.2} & \meansd{11.2}{0.1} & \meansd{10.7}{0.1} & \meansd{24.1}{0.0} & \meansd{15.7}{0.2} & \meansd{24.9}{0.1} & \meansd{23.8}{0.1} & \meansd{33.3}{0.0} & \meansd{31.9}{0.0} & \meansd{37.0}{0.2} & \meansd{54.9}{0.1} & \meansd{18.3}{0.0} & \meansd{14.0}{0.1} & \meansd{34.0}{0.1} & \meansd{34.2}{0.1} & \meansd{25.2}{0.0} \\
            +{\algname} & - & \meansd{14.8}{0.2} & \meansd{17.0}{0.2} & \meansd{15.7}{0.1} & \meansd{25.0}{0.1} & \meansd{16.8}{0.2} & \meansd{26.7}{0.0} & \meansd{25.3}{0.1} & \meansd{34.2}{0.0} & \meansd{32.5}{0.3} & \meansd{38.5}{0.1} & \meansd{54.5}{0.2} & \meansd{20.0}{0.0} & \meansd{15.8}{0.1} & \meansd{35.2}{0.2} & \meansd{34.6}{0.1} & \meansd{27.1}{0.0} & +1.9 \\
            \midrule
            DMN-ZS & CVPR'24 & \meansd{11.4}{0.4} & \meansd{12.7}{0.3} & \meansd{12.3}{0.2} & \meansd{23.3}{0.0} & \meansd{15.2}{0.1} & \meansd{24.6}{0.1} & \meansd{22.7}{0.0} & \meansd{32.3}{0.1} & \meansd{29.8}{0.0} & \meansd{36.0}{0.2} & \meansd{54.1}{0.3} & \meansd{17.3}{0.2} & \meansd{12.8}{0.1} & \meansd{31.1}{0.1} & \meansd{33.3}{0.1} & \meansd{24.6}{0.1} \\
            +{\algname} & - & \meansd{14.6}{0.2} & \meansd{16.7}{0.2} & \meansd{15.6}{0.1} & \meansd{24.2}{0.0} & \meansd{16.0}{0.1} & \meansd{26.2}{0.0} & \meansd{24.2}{0.0} & \meansd{32.9}{0.1} & \meansd{31.2}{0.1} & \meansd{37.4}{0.2} & \meansd{53.3}{0.0} & \meansd{18.9}{0.1} & \meansd{14.9}{0.1} & \meansd{32.9}{0.2} & \meansd{33.5}{0.0} & \meansd{26.2}{0.0} & +1.6 \\
            \midrule
            Zero & NeurIPS'24 & \meansd{5.4}{0.0} & \meansd{6.4}{0.0} & \meansd{5.5}{0.0} & \meansd{23.0}{0.0} & \meansd{15.1}{0.0} & \meansd{22.2}{0.0} & \meansd{23.1}{0.0} & \meansd{32.6}{0.0} & \meansd{31.5}{0.0} & \meansd{37.1}{0.0} & \meansd{53.1}{0.0} & \meansd{27.5}{0.0} & \meansd{16.4}{0.0} & \meansd{37.4}{0.0} & \meansd{32.7}{0.0} & \meansd{24.6}{0.0} \\
            +{\algname} & - & \meansd{12.3}{0.0} & \meansd{15.4}{0.0} & \meansd{14.0}{0.0} & \meansd{23.8}{0.0} & \meansd{16.1}{0.0} & \meansd{24.1}{0.0} & \meansd{24.4}{0.0} & \meansd{33.2}{0.0} & \meansd{33.0}{0.0} & \meansd{37.6}{0.0} & \meansd{52.4}{0.0} & \meansd{29.5}{0.0} & \meansd{18.5}{0.0} & \meansd{39.1}{0.0} & \meansd{33.6}{0.0} & \meansd{27.1}{0.0} & +2.5 \\
            \midrule
            TPS & WACV'25 & \meansd{9.5}{0.1} & \meansd{10.8}{0.1} & \meansd{10.3}{0.1} & \meansd{23.7}{0.3} & \meansd{15.6}{0.0} & \meansd{24.7}{0.2} & \meansd{23.1}{0.1} & \meansd{33.1}{0.1} & \meansd{31.5}{0.2} & \meansd{37.1}{0.4} & \meansd{54.8}{0.0} & \meansd{20.2}{0.0} & \meansd{13.9}{0.1} & \meansd{34.2}{0.1} & \meansd{34.4}{0.1} & \meansd{25.1}{0.1} \\
            +{\algname} & - & \meansd{15.0}{0.1} & \meansd{17.4}{0.0} & \meansd{15.9}{0.1} & \meansd{24.8}{0.2} & \meansd{16.5}{0.1} & \meansd{26.2}{0.1} & \meansd{25.0}{0.0} & \meansd{34.1}{0.1} & \meansd{32.7}{0.1} & \meansd{38.5}{0.2} & \meansd{54.3}{0.0} & \meansd{20.9}{0.0} & \meansd{15.9}{0.2} & \meansd{36.0}{0.1} & \meansd{34.8}{0.1} & \meansd{27.2}{0.0} & +2.1 \\
            \midrule
            BAT & ICCV'25 & \meansd{15.0}{0.2} & \meansd{16.3}{0.2} & \meansd{15.1}{0.2} & \meansd{23.6}{0.2} & \meansd{16.2}{0.3} & \meansd{26.2}{0.2} & \meansd{23.5}{0.2} & \meansd{31.5}{0.2} & \meansd{27.3}{0.2} & \meansd{36.4}{0.4} & \meansd{52.2}{0.4} & \meansd{19.6}{0.1} & \meansd{14.2}{0.3} & \meansd{32.6}{0.2} & \meansd{34.7}{0.1} & \meansd{25.6}{0.0} \\
            +{\algname} & - & \meansd{17.1}{0.1} & \meansd{18.8}{0.3} & \meansd{17.8}{0.4} & \meansd{24.3}{0.2} & \meansd{17.1}{0.3} & \meansd{27.6}{0.2} & \meansd{24.8}{0.2} & \meansd{32.6}{0.4} & \meansd{29.7}{0.3} & \meansd{38.1}{0.3} & \meansd{53.2}{0.4} & \meansd{21.1}{0.2} & \meansd{16.3}{0.3} & \meansd{34.3}{0.2} & \meansd{35.6}{0.1} & \meansd{27.2}{0.1} & +1.6 \\
            \bottomrule 
        \end{tabular}
    }
    }
    \caption{Accuracy (mean (s.d) \%) on corruption benchmarks including each corruption domains.}
    \label{tab:acc:main_std}
\end{table*}

\begin{table}
    \centering
    \resizebox{1.0\linewidth}{!}{
    \setlength{\tabcolsep}{1.0mm}{
        \begin{tabular}{lccccccccccccccccc}
            \toprule
            & \multicolumn{16}{c}{ViT-B/32 on CIFAR-10-C} \\
            \cmidrule(lr){2-17}
            Method & \multicolumn{3}{c}{Noise} & \multicolumn{4}{c}{Blur} & \multicolumn{4}{c}{Weather} & \multicolumn{4}{c}{Digital} & Avg. \\
            \cmidrule(lr){2-4} \cmidrule(lr){5-8} \cmidrule(lr){9-12} \cmidrule(lr){13-16}
            & Gauss. & Shot & Impul. 
            & Defoc. & Glass & Motion & Zoom  
            & Snow & Frost & Fog & Brit. 
            & Contr. & Elastic & Pixel & JPEG \\
            \midrule
            CLIP & \meansd{35.5}{0.0} & \meansd{39.9}{0.0} & \meansd{43.1}{0.0} & \meansd{69.9}{0.0} & \meansd{41.5}{0.0} & \meansd{64.5}{0.0} & \meansd{70.1}{0.0} & \meansd{70.9}{0.0} & \meansd{72.3}{0.0} & \meansd{66.6}{0.0} & \meansd{81.3}{0.0} & \meansd{64.5}{0.0} & \meansd{59.6}{0.0} & \meansd{48.1}{0.0} & \meansd{56.7}{0.0} & \meansd{59.0}{0.0} \\
            Select \& weight & \meansd{47.8}{0.1} & \meansd{51.7}{0.2} & \meansd{48.6}{0.1} & \meansd{74.2}{0.3} & \meansd{52.3}{0.2} & \meansd{70.2}{0.1} & \meansd{74.6}{0.2} & \meansd{75.2}{0.1} & \meansd{76.8}{0.1} & \meansd{72.8}{0.0} & \meansd{85.7}{0.2} & \meansd{72.5}{0.0} & \meansd{64.8}{0.1} & \meansd{54.8}{0.1} & \meansd{60.7}{0.2} & \meansd{65.5}{0.0} \\
            Offset (Ours) & \meansd{53.9}{0.1} & \meansd{56.8}{0.1} & \meansd{48.5}{0.1} & \meansd{76.7}{0.1} & \meansd{55.3}{0.1} & \meansd{73.5}{0.0} & \meansd{77.1}{0.2} & \meansd{77.4}{0.1} & \meansd{78.3}{0.0} & \meansd{75.5}{0.2} & \meansd{86.2}{0.1} & \meansd{73.9}{0.1} & \meansd{68.0}{0.2} & \meansd{61.6}{0.1} & \meansd{61.6}{0.2} & \meansd{68.3}{0.0} \\
            \midrule
            & \multicolumn{16}{c}{ViT-B/32 on CIFAR-100-C} \\
            \cmidrule(lr){2-17}
            Method & \multicolumn{3}{c}{Noise} & \multicolumn{4}{c}{Blur} & \multicolumn{4}{c}{Weather} & \multicolumn{4}{c}{Digital} & Avg. \\
            \cmidrule(lr){2-4} \cmidrule(lr){5-8} \cmidrule(lr){9-12} \cmidrule(lr){13-16}
            & Gauss. & Shot & Impul. 
            & Defoc. & Glass & Motion & Zoom  
            & Snow & Frost & Fog & Brit. 
            & Contr. & Elastic & Pixel & JPEG \\
            \midrule
            CLIP & \meansd{16.2}{0.0} & \meansd{17.9}{0.0} & \meansd{17.6}{0.0} & \meansd{39.0}{0.0} & \meansd{17.7}{0.0} & \meansd{38.6}{0.0} & \meansd{43.8}{0.0} & \meansd{42.2}{0.0} & \meansd{43.4}{0.0} & \meansd{39.6}{0.0} & \meansd{50.3}{0.0} & \meansd{29.3}{0.0} & \meansd{28.8}{0.0} & \meansd{22.9}{0.0} & \meansd{29.4}{0.0} & \meansd{31.8}{0.0} \\
            Select \& weight & \meansd{18.0}{1.6} & \meansd{21.7}{1.6} & \meansd{15.3}{1.5} & \meansd{50.4}{0.4} & \meansd{9.4}{1.2} & \meansd{49.9}{0.4} & \meansd{50.7}{1.5} & \meansd{50.9}{0.5} & \meansd{53.1}{0.6} & \meansd{52.2}{0.5} & \meansd{64.2}{0.1} & \meansd{53.7}{0.6} & \meansd{20.3}{1.3} & \meansd{29.2}{1.6} & \meansd{31.0}{1.8} & \meansd{38.0}{0.3} \\
            Offset (Ours) & \meansd{29.3}{0.3} & \meansd{31.4}{0.2} & \meansd{27.3}{0.2} & \meansd{50.4}{0.0} & \meansd{26.3}{0.3} & \meansd{49.6}{0.2} & \meansd{52.8}{0.1} & \meansd{51.5}{0.3} & \meansd{51.9}{0.1} & \meansd{51.7}{0.2} & \meansd{63.3}{0.2} & \meansd{50.8}{0.4} & \meansd{36.7}{0.1} & \meansd{37.9}{0.2} & \meansd{37.7}{0.1} & \meansd{43.3}{0.1} \\
            \midrule
            & \multicolumn{16}{c}{ViT-B/16 on ImageNet-C} \\
            \cmidrule(lr){2-17}
            Method & \multicolumn{3}{c}{Noise} & \multicolumn{4}{c}{Blur} & \multicolumn{4}{c}{Weather} & \multicolumn{4}{c}{Digital} & Avg. \\
            \cmidrule(lr){2-4} \cmidrule(lr){5-8} \cmidrule(lr){9-12} \cmidrule(lr){13-16}
            & Gauss. & Shot & Impul. 
            & Defoc. & Glass & Motion & Zoom  
            & Snow & Frost & Fog & Brit. 
            & Contr. & Elastic & Pixel & JPEG \\
            \midrule
            CLIP & \meansd{11.2}{0.0} & \meansd{12.6}{0.0} & \meansd{12.0}{0.0} & \meansd{23.3}{0.0} & \meansd{15.2}{0.0} & \meansd{24.4}{0.0} & \meansd{22.6}{0.0} & \meansd{32.4}{0.0} & \meansd{29.8}{0.0} & \meansd{35.9}{0.0} & \meansd{54.0}{0.0} & \meansd{17.3}{0.0} & \meansd{12.7}{0.0} & \meansd{31.0}{0.0} & \meansd{33.4}{0.0} & \meansd{24.5}{0.0} \\
            Select \& weight & \meansd{17.0}{0.3} & \meansd{17.6}{0.4} & \meansd{16.6}{0.1} & \meansd{25.7}{0.1} & \meansd{22.0}{0.2} & \meansd{30.1}{0.2} & \meansd{24.9}{0.1} & \meansd{36.7}{0.1} & \meansd{32.6}{0.1} & \meansd{43.2}{0.2} & \meansd{57.7}{0.3} & \meansd{28.6}{0.2} & \meansd{2.4}{0.3} & \meansd{40.1}{0.2} & \meansd{39.7}{0.3} & \meansd{29.0}{0.2} \\
            Offset (Ours) & \meansd{18.1}{0.2} & \meansd{19.6}{0.2} & \meansd{18.5}{0.0} & \meansd{27.0}{0.1} & \meansd{20.0}{0.2} & \meansd{29.2}{0.2} & \meansd{27.8}{0.1} & \meansd{35.4}{0.0} & \meansd{33.4}{0.2} & \meansd{40.3}{0.2} & \meansd{56.1}{0.2} & \meansd{23.0}{0.3} & \meansd{18.0}{0.1} & \meansd{36.3}{0.1} & \meansd{37.6}{0.2} & \meansd{29.4}{0.0} \\
            \bottomrule 
        \end{tabular}
        
    }
    }
    \caption{Comparison of accuracy (mean (s.d.) \%) between negative augmentation strategies in DeYO and {\algname} including each corruption domain. Selecting and weighting testing samples using NDA is from DeYO, and offsetting bias features is from {\algname}.}
    \label{table-deyo-std}
\end{table}

\subsection{Hyperparameter sensitivity experiment}

Due to space limitations, the hyperparameter sensitivity experiment in term of learning rate mentioned in the main text is presented here in details. These hyperparameter sensitivity experiment, focusing on the stability of {\algname} under different learning rates when combined with other methods. Specifically, we evaluate {\algname} in conjunction with two representative baselines, Tent and SAR. As shown in Figure~\ref{fig-tentpanda-lr} and Figure~\ref{fig-sarpanda-lr}. Within a relatively wide range of learning rates, both Tent+{\algname} and Tent+{\algname} can maintain consistently high performance.

\begin{figure}[ht]
    \centering
    \includegraphics[width=0.5\linewidth]{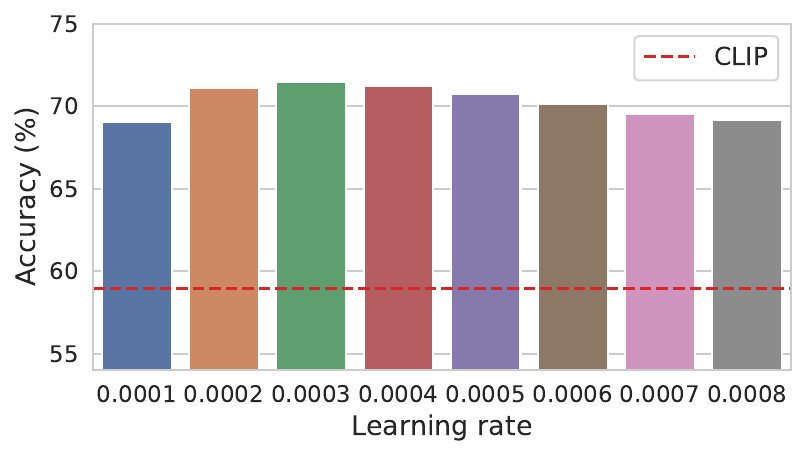}
    \caption{Accuracy of Tent combined with {\algname} under different learning rates on CIFAR-10-C with ViT-B/32.}
    \label{fig-tentpanda-lr}
\end{figure}

\begin{figure}[ht]
    \centering
    \includegraphics[width=0.5\linewidth]{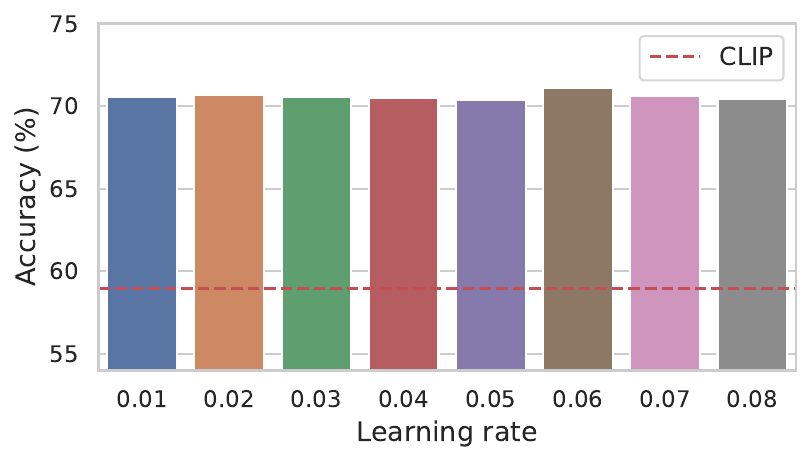}
    \caption{Accuracy of SAR combined with {\algname} under different learning rates on CIFAR-10-C with ViT-B/32.}
    \label{fig-sarpanda-lr}
\end{figure}

\subsection{Infrastructure setting for experiments}

All experiments are conducted on a single NVIDIA Tesla V100 GPU with 32GB memory, except for large-batch-size settings, which are run on a single NVIDIA Tesla A100 GPU with 80GB memory. Experiments are implemented using PyTorch 2.0.1 with CUDA 11.7 and Python 3.9, and executed within the PyCharm 2025.1.3.1 environment.

\end{document}